\theoremstyle{definition}
\newtheorem{theorem}{Theorem}[section]
\newtheorem{lemma}[theorem]{Lemma}
\newtheorem{proposition}[theorem]{Proposition}
\newtheorem{definition}[theorem]{Definition}
\newtheorem{remark}[theorem]{Remark}
\newcommand{\T}{\mathsf{T}} 
\newcommand{\N}{\mathbb{N}} 
\newcommand{\E}{\mathbb{E}} 
\newcommand{\F}{\mathcal{F}}
\newcommand{\R}{\mathbb{R}}
\newcommand{\Q}{\mathbb{Q}}
\newcommand{\Z}{\mathbb{Z}}
\newcommand{\x}{\mathbf{x}}
\newcommand{\w}{\mathbf{w}}
\renewcommand{\b}{\mathbf{b}}
\renewcommand{\c}{\mathbf{c}}
\newcommand{\X}{\mathcal{X}}
\newcommand{\W}{\mathcal{W}}
\newcommand{\V}{\mathcal{V}}
\newcommand{\D}{\mathcal{D}}
\newcommand{\A}{\mathcal{A}}
\newcommand{\G}{\mathcal{G}}
\renewcommand{\H}{\mathcal{H}}
\renewcommand{\L}{\mathcal{L}}
\renewcommand{\O}{\mathcal{O}}
\renewcommand{\S}{\mathcal{S}}
\newcommand{\I}{\mathcal{I}}
\DeclareMathOperator{\sgn}{sgn} 
\DeclareMathOperator{\argmax}{arg\,max} 
\DeclareMathOperator{\Pdim}{Pdim} 
\DeclareMathOperator{\MLP}{\mathsf{MLP}} 
\title{Generalization Guarantees for Learning Branch-and-Cut Policies in Integer Programming}
\author{
  Hongyu Cheng\\
  Dept. of Applied Mathematics \& Statistics\\
  Johns Hopkins University\\
  Baltimore, MD 21218 \\
  \texttt{hongyucheng@jhu.edu} \\
  \And
  Amitabh Basu \\
  Dept. of Applied Mathematics \& Statistics\\
  Johns Hopkins University\\
  Baltimore, MD 21218 \\
  \texttt{basu.amitabh@jhu.edu} \\
}
\begin{document}

\maketitle

\begin{abstract}
    Mixed-integer programming (MIP) provides a powerful framework for optimization problems, with Branch-and-Cut (B\&C) being the predominant algorithm in state-of-the-art solvers. The efficiency of B\&C critically depends on heuristic policies for making sequential decisions, including node selection, cut selection, and branching variable selection. While traditional solvers often employ heuristics with manually tuned parameters, recent approaches increasingly leverage machine learning, especially neural networks, to learn these policies directly from data. A key challenge is to understand the theoretical underpinnings of these learned policies, particularly their generalization performance from finite data. This paper establishes rigorous sample complexity bounds for learning B\&C policies where the scoring functions guiding each decision step (node, cut, branch) have a certain piecewise polynomial structure. This structure generalizes the linear models that form the most commonly deployed policies in practice and investigated recently in a foundational series of theoretical works by Balcan et al. Such piecewise polynomial policies also cover the neural network architectures (e.g., using ReLU activations) that have been the focal point of contemporary practical studies. Consequently, our theoretical framework closely reflects the models utilized by practitioners investigating machine learning within B\&C, offering a unifying perspective relevant to both established theory and modern empirical research in this area. Furthermore, our theory applies to quite general sequential decision making problems beyond B\&C.
\end{abstract}

\section{Introduction}\label{sec:introduction}
Mixed-Integer Programming (MIP) provides a powerful optimization tool for problems arising in diverse fields such as finance \citep{cornuejols2018optimization}, vehicle routing~\citep{toth2014vehicle}, computational and systems biology \citep{gusfield2019integer}, telecommunications network design \citep{kerivin2005design}, production planning \citep{pochet2006production}, to name a few, where decisions involve both discrete choices and continuous adjustments. We focus on problems formulated as $\min \left\{ \c^\T \x \mid A \x \leq \b, \x \in \Z^{n_1}_+ \times \R^{n_2}_+ \right\}$, where $A \in \Q^{m\times(n_1+n_2)}$, $\b \in \Q^{m}$, and $\c \in \R^{n_1+n_2}$. The predominant methodology for solving such MIPs is the Branch-and-Cut (B\&C) algorithm \citep{junger200950,conforti2014integer} (see \cref{alg:standard_bnc_intro} below for a standard outline). B\&C operates by exploring a search tree where each node represents a linear programming (LP) relaxation of a subproblem derived from the original MIP. The process involves solving these LP relaxations, adding valid inequalities (cutting planes) to tighten the relaxations without excluding feasible integer solutions, and partitioning the solution space (branching) by imposing constraints on variables, typically integer-constrained variables ($j \in [n_1]$) that take fractional values in an LP solution. The practical performance of the B\&C algorithm is critically influenced by the sequence of strategic decisions made during this dynamic process, including how nodes are selected for exploration, which cutting planes are added, and which variables are chosen for branching \citep{achterberg2005branching}.

\begin{algorithm}[H]
    \caption{The Branch-and-Cut Algorithm}
    \label{alg:standard_bnc_intro}
    \begin{algorithmic}[1]
      \Require Initial MIP instance $I$, maximum rounds $M$, error tolerance $\epsilon_{\mathsf{gap}}$.
      \State Initialize open node list $\L \leftarrow \{\mathsf{N}_0\}$, $\mathsf{UB} \leftarrow \infty$, $\mathsf{LB} \leftarrow -\infty$, $\x^* \leftarrow \text{null}$, $i \leftarrow 0$.
      \While{$\L \neq \emptyset$ and $\mathsf{UB} - \mathsf{LB} > \epsilon_{\mathsf{gap}}$ and $i < M$}
          \State \textbf{Node Selection:} Choose $\mathsf{N}^* \in \L$ via node selection policy; $\L \leftarrow \L \setminus \{\mathsf{N}^*\}$.
          \State Solve LP at $\mathsf{N}^*$; let solution be $\x^{\mathsf{N}^*}_{\mathsf{LP}}$ and value be $z^{\mathsf{N}^*}_{\mathsf{LP}}$.
          \State Let $z^{\mathsf{N}}_{\mathsf{LP}}$ denote the LP value associated with any node $\mathsf{N} \in \L$.
          \If{LP infeasible or $z^{\mathsf{N}^*}_{\mathsf{LP}} \ge \mathsf{UB}$} \textbf{goto} \cref{line:inc_i_std_bnc_final} \EndIf
          \If{$\x^{\mathsf{N}^*}_{\mathsf{LP}}$ is integer feasible}
              $\mathsf{UB} \leftarrow z^{\mathsf{N}^*}_{\mathsf{LP}}$; $\x^* \leftarrow \x^{\mathsf{N}^*}_{\mathsf{LP}}$; Prune (remove) $\mathsf{N} \in \L$ with $z^{\mathsf{N}}_{\mathsf{LP}} \ge \mathsf{UB}$.
              \State \textbf{goto} \cref{line:inc_i_std_bnc_final}
          \EndIf
          \State {Decide whether to add cutting planes (\textbf{goto} \cref{line:cut}) or branch (\textbf{goto} \cref{line:branch}).}
          \State \label{line:cut}\textbf{Cut Selection:} Generate candidate cuts $\mathsf{C}$; Select $\mathsf{C}^* \subseteq \mathsf{C}$ via cut selection policy.
          \State Add $\mathsf{C}^*$ to $\mathsf{N}^*$'s formulation. Keep $\mathsf{N}^*$ unchanged if $\mathsf{C}^* = \varnothing$.
          \State Update $\L \leftarrow \L \cup \{ \mathsf{N}^*\}$, $\mathsf{LB} \leftarrow \min_{\mathsf{N} \in \L} \{z^{\mathsf{N}}_{\mathsf{LP}}\}$, and \textbf{goto} \cref{line:inc_i_std_bnc_final}
          \State \label{line:branch} \textbf{Branching:} Select fractional variable index $j^* \in [n]$ from $\x^{\mathsf{N}^*}_{\mathsf{LP}}$ via branching policy.
          \State Create $\mathsf{N}_L, \mathsf{N}_R$ from $\mathsf{N}^*$ by adding constraints $\x_{j^*} \leq \lfloor\x^{\mathsf{N}^*}_{\mathsf{LP}} \rfloor$ and $\x_{j^*} \geq \lceil\x^{\mathsf{N}^*}_{\mathsf{LP}} \rceil$.
          \State Update $\L \leftarrow \L \cup \{ \mathsf{N}_L, \mathsf{N}_R\}$, $\mathsf{LB} \leftarrow \min_{\mathsf{N} \in \L} \{z^{\mathsf{N}}_{\mathsf{LP}}\}$.
          \State \label{line:inc_i_std_bnc_final} $i \leftarrow i + 1$.
      \EndWhile
      \Ensure Best found incumbent solution $\x^*$ and final bounds $\mathsf{LB}, \mathsf{UB}$.
    \end{algorithmic}
  \end{algorithm}

To implement these strategic choices, B\&C often makes decisions based on scoring rules. A prominent example arises in the crucial task of \emph{cut selection} within state-of-the-art solvers like SCIP \citep{achterberg2009scip,gamrath2020scip}. At a given state $s$ of the search (characterized, for instance, by the current problem $(A,\b,\c)$, current LP solution $\x^*_{\mathsf{LP}}$, and potentially the best known integer feasible solution $\bar{\x}$, often called the incumbent solution), a set $\A^s$ of candidate cuts is available. A cut selection policy must then choose a beneficial subset from $\A^s$.

Many widely-used, human-designed heuristics implement this selection by scoring each candidate cut $a \in \A^s$, represented by $\bm{\alpha}^\T \x \le \beta$. These policies employ various \emph{feature extractors}, functions that map the pair $(s, a)$ to a real-valued quality measure. Key feature extractors used in SCIP include \citep{achterberg2007constraint, wesselmann2012implementing,gamrath2020scip}:
\begin{itemize}
    \item Efficacy ($\phi_{\text{eff}}$): $ \phi_{\text{eff}}(s, a) := (\bm{\alpha}^\T \x^*_{\mathsf{LP}} - \beta) / \|\bm{\alpha}\|_2$. This measures the Euclidean distance by which the current LP solution $\x^*_{\mathsf{LP}}$ violates the cut.
    \item Objective Parallelism ($\phi_{\text{obj}}$): $ \phi_{\text{obj}}(s, a) := |\bm{\alpha}^\T \c| / (\|\bm{\alpha}\|_2 \|\c\|_2)$. This measures the similarity between the cut normal $\bm{\alpha}$ and the objective vector $\c$.
    \item Directed Cutoff Distance ($\phi_{\text{dcd}}$): This feature quantifies efficacy specifically in the direction from $\x^*_{\mathsf{LP}}$ towards the incumbent $\bar{\x}$, when $\bar{\x}$ is available and distinct from $\x^*_{\mathsf{LP}}$. It is computed as $\phi_{\text{dcd}}(s, a) := (\bm{\alpha}^\T \x^*_{\mathsf{LP}} - \beta) / (|\bm{\alpha}^\T (\bar{\x} - \x^*_{\mathsf{LP}})| / \|\bar{\x} - \x^*_{\mathsf{LP}}\|_2)$.
    \item Integral Support ($\phi_{\text{int}}$): $ \phi_{\text{int}}(s, a) := |\{j \in [n_1] \mid \bm{\alpha}_j \neq 0\}| / |\{j \in [n] \mid \bm{\alpha}_j \neq 0\}|$, where $n = n_1 + n_2$ is the total number of decision variables. This calculates the fraction of integer-constrained variables ($j \in [n_1]$) among all variables involved in the cut.
\end{itemize}
These feature extractors produce a vector $\phi(s,a) = (\phi_{\text{eff}}(s, a), \phi_{\text{obj}}(s, a), \phi_{\text{dcd}}(s, a), \phi_{\text{int}}(s, a))^\T \in \R^{4}$. Policies like SCIP's default Hybrid selector define the score using a linear function determined by a parameter vector $\w = (\w_1,\w_2,\w_3,\w_4)^\T \in \R^4$. The score is computed as:
\begin{equation*}
    f(s, a, \w) = \w^\T \phi(s, a) = \w_1 \phi_{\text{eff}}(s, a) + \w_2 \phi_{\text{obj}}(s, a) + \w_3 \phi_{\text{dcd}}(s, a) + \w_4 \phi_{\text{int}}(s, a).
\end{equation*}
Crucially, the choice of the weight vector $\w$ defines a specific cut selection policy within this linear scoring framework; different weights lead to different prioritizations of cuts. The weights $\w$ are typically pre-tuned based on extensive computational experiments. SCIP's more advanced Ensemble selector also employs a weighted-sum score $f(s, a, \w') = (\w')^\T \phi'(s, a)$, but utilizes an extended feature vector $\phi'(s, a) \in \R^\ell$ (where $\ell$ may be larger than 4) that includes additional, often normalized, metrics like density, dynamism, pseudo-cost scores, and variable lock information \citep{achterberg2007constraint}. The selection policy then chooses the cut $a^*$ that yields the highest score.

This practice of using parameterized, score-based rules is widespread in B\&C. For instance, beyond the linear scoring approach exemplified by SCIP's Hybrid cut selector, other critical decisions also rely on similar mechanisms. Branching variable selection frequently employs heuristic scores derived from informative features, such as pseudo-costs which estimate the impact of branching on candidate variables \citep{achterberg2007constraint, achterberg2005branching}. Node selection strategies also commonly utilize scoring based on various node attributes \citep{He2014node, Yilmaz2021}. The prevalence of such score-guided heuristics across multiple B\&C components motivates our study of such policies within the following general framework.

Let $\S$ be the state space, i.e., the set of all possible tuples $(\mathcal{L}, \x^*, i, \mathsf{N}^*)$ that can arise during the execution of the branch-and-cut algorithm (\cref{alg:standard_bnc_intro}); if the node selection step has not been executed, the state is simply $(\mathcal{L}, \x^*, i)$. Let $\A_k$ be the space of possible actions for a specific decision type $k$ (e.g., $k=1$ for node selection, $k=2$ for cut selection, $k=3$ for branching). For a given state $s \in \S$, let $\A_k^s \subseteq \A_k$ denote the set of available actions of type $k$ for state $s$ (e.g., $\A_2^s = \A^s$ in the example above contains candidate cuts). The goal is to learn a parameterized scoring function $f_k: \S \times \A_k \times \W_k \rightarrow \R_+$, parameterized by $\w^k \in \W_k$, where $\W_k$ is the parameter space (e.g., the weight vector $\w$ in the SCIP Hybrid example belongs to $\W_2$). This function evaluates potential actions $a \in \A_k^s$. A fixed feature extractor function $\phi_k: \S \times \A_k \rightarrow \R^{\ell_k}$ maps the state-action pair $(s, a)$ to a feature vector (e.g., $\phi(s,a) \in \R^4$ for $k=2$ in the example above). The scoring function often takes the form $f_k(s, a, \w^k) = \psi_k(\phi_k(s, a), \w^k)$, where $\psi_k$ could be an ML model like a Multi-Layer Perceptron (MLP, \cref{def:DNN}) or, as exemplified by SCIP's Hybrid selector, a linear function $\psi_k(\phi_k(s, a), \w^k) = (\w^k)^\T \phi_k(s,a)$. The chosen action $a^*$ is typically one that maximizes the score, selected according to $a^* \in \argmax_{a \in \A_k^s} f_k(s, a, \w^k)$ (with a consistent tie-breaking rule). Training the parameters $\w^k$ often relies on supervisory signals from an expert function or oracle, which provides high-quality action evaluations (e.g., Strong Branching scores \citep{Alvarez2017}, exact bound improvements from cuts \citep{Paulus2022}) or assesses the overall impact on the search (e.g., reduction in runtime or tree size \citep{huang2022learning}). Supervised learning methods then use these expert-derived values as targets or labels to optimize $\w^k$.

\section{Prior work and our contributions}\label{sec:prior-work}

We first discuss prior computational and theoretical work on using ML techniques to make B\&C decisions, and then describe our contributions in that context. We will use the language of the general framework described above to discuss everything in a unified way. 

\subsection{Related empirical work}\label{sec:related-empirical-work}

\paragraph{Learning to select node.} (Action type $k=1$) Selecting which node to explore next from the queue $\L$ of active nodes (i.e., the set of potential actions $\A_1^s = \L$) is a critical B\&C decision. \citet{He2014node} learn a linear scoring function  where $f_1(s, a, \w^1) = (\w^1)^\T \phi_1(s, a)$, to score each candidate node $a \in \A_1^s$. For each candidate node $a$, its features $\phi_1(s,a)$, categorized as node, branching, and tree features, are obtained using information typically recorded by solvers. The node with the highest score is selected, with the policy trained to imitate an oracle that exclusively explores nodes leading to an optimal integer solution. Differing in scope, \citet{Yilmaz2021} learn a policy for child selection. Their scoring function uses an MLP classifier $\psi_1$. This MLP takes as input features comprising 29 base metrics (listed in their Table 1). These include features of the branched variable (e.g., its simplex basis status), features of the newly created child nodes (e.g., their bounds), and tree features (e.g., global bounds and current depth). The MLP computes scores $f_1(s,a,\w^1)$ for three distinct actions following a branch—exploring the left child node, the right child node, or both children, and selects the action with the highest score. Their expert policy is derived from paths to the top-$k$ known solutions.

\paragraph{Learning to cut.} (Action type $k=2$) In state $s= (\mathcal{L}, \x^*, i, \mathsf{N}^*)$, an action $a \in \A_2^s$ corresponds to selecting candidate cuts at node $\mathsf{N}^*$. \citet{huang2022learning} use fixed features $\phi_2(s, a) \in \R^{14}$ (detailed in their Table 1) as input to an MLP $\psi_2$. This MLP outputs a score $f_2(s, a, \w^2) = \psi_2(\phi_2(s, a), \w^2)$ for each cut, and cuts with the highest scores are selected. Their expert oracle assesses the quality of sets (bags) of cuts based on overall runtime reduction. \citet{Paulus2022} also utilize a fixed initial feature representation (as detailed in their Table 5, based on \citet{gasse2019exact}), encoding the LP relaxation and candidate cuts $\A_2^s$ as a tripartite graph. Their scoring function is a complex model comprising a Graph Convolutional Neural Network (GCNN), an attention mechanism, and a final MLP (the entire set of parameters for this composite model is learned from data). This function produces a score for each cut, and cuts with the highest scores are then selected. Their expert signal is the exact LP bound improvement (Lookahead score) from individual cuts.

\paragraph{Learning to branch.} (Action type $k=3$) In state $s = (\mathcal{L}, \x^*, i, \mathsf{N}^*)$, an action $a \in \A_3^s$ represents choosing a fractional variable $a \in [n_1]$ to branch on within node $\mathsf{N}^*$. Strong Branching (SB) often serves as the expert oracle for this task. \citet{Khalil2016} proposed a linear scoring function $f_3(s, a, \w^3) = (\w^3)^\T \phi_3(s, a)$. Their fixed feature extractor $\phi_3(s,a)$ computes features (based on 72 atomic features as detailed in their Table 1) describing candidate variable $a$ in the context of the current node. The variable $a$ with the highest score $f_3(s,a,\w^3)$ is selected for branching. Similarly, \citet{Alvarez2017} used an Extremely Randomized Trees model \citep{geurts2006extremely} to directly predict SB scores $f_3(s,a,\w^3)$; here, the parameters $\w^3$ define the learned structure of the tree ensemble. Their fixed feature extractor $\phi_3(s,a)$ calculates features for variable $a$ based on static problem data, dynamic information from the current LP solution, and optimization history. The variable $a \in [n_1]$ yielding the highest predicted SB score is chosen.

These learning-based strategies empirically demonstrate the potential to automate the design of high-performance policies for various decisions within B\&C solvers, including node selection, cut selection, and branching, by leveraging data and expert knowledge in a structured manner. Consequently, establishing a rigorous theoretical foundation for these learned policies, particularly regarding their sample complexity and generalization guarantees, becomes essential.

\subsection{Related theoretical work}

In statistical learning theory, the goal is to find a parameter $\w \in \W$ that minimizes the expected cost $\E_{I \sim \D}[V(I, \w)]$, where $V : \I \times \W \to [0, H]$ is a cost function bounded above by some $H > 0$, and $\D$ is an unknown distribution over the instance space $\I$. In the context of branch-and-cut, $V(I, \w)$ represents a measure of the overall performance (e.g., final tree size, solution time) when applying a policy (for node, cut, and branching variable selections) parameterized by $\w$ to solve the initial problem instance $I$. Since $\D$ is unknown, we usually rely on an empirical estimate derived from a finite sample $\{I_1, \ldots, I_N\} \subseteq \I$ drawn independently and identically distributed (i.i.d.) from $\D$, by selecting the parameter $\w$ that minimizes the empirical average $\frac{1}{N} \sum_{i=1}^N V(I_i, \w)$. A fundamental question concerns the \emph{sample complexity} of uniform convergence: how fast does the empirical average converge to the true expectation, uniformly for all parameters $\w \in \W$, as the sample size $N$ increases?

Standard results relate this uniform convergence rate to measures of the complexity of the function class $\V = \{ V(\cdot, \w) : \I \to [0,H] \mid \w \in \W \}$. One such measure is the pseudo-dimension, $\Pdim(\V)$, and classical results (see, e.g., Chapter 19 in \citet{anthony1999neural}) give bounds of the form: with probability at least $1-\delta$,
\begin{equation}\label{eq:uniform-convergence-pdim}
    \sup_{\w \in \W} \left|\frac{1}{N} \sum_{i=1}^N V(I_i, \w)-\E_{I \sim \D}[V(I, \w)]\right| = \O \left( H \sqrt{ \frac{\Pdim(\V) + \log(1/\delta)}{N} }\right). 
\end{equation}

A significant line of research establishing rigorous sample complexity bounds for data-driven algorithm configuration was initiated by \citet{gupta2016pac}. This PAC-style approach inspired investigations into the learnability of parameters for various algorithms by analyzing the structure of algorithm performance with respect to its parameters. Recent applications include configuring components within B\&C (such as variable selection \citep{balcan2018learning} and cut generation/selection \citep{balcan2021sample,balcan2022structural,balcan2021improved,Cheng2024Sample,Cheng2024Learning}), learning heuristic functions for graph search algorithms \citep{sakaue2022sample}, tuning parameters for iterative methods like gradient descent \citep{jiao2025learning}, configuring data-driven projections for linear programming \citep{sakaue2024generalization}, analyzing combinatorial partitioning problems \citep{balcan2017learning}, mechanism design for revenue maximization \citep{balcan2025generalization}, tuning ElasticNet \citep{balcan2022provably}, hyperparameter tuning in neural networks \cite{balcan2025sample}, and learning decision trees \citep{balcan2024learning}. Building on the techniques used across these diverse applications, Balcan et al. formalized and generalized the approach, particularly emphasizing the role of the \emph{piecewise structure} of algorithm performance in establishing sample complexity guarantees \citep{balcan2021much,balcan2024algorithm}. Related theoretical foundations, surveys, and analyses are also available in \citep{balcan2020data, balcan2021generalization}.

The first application of this PAC learning paradigm specifically to the B\&C setting was presented by \citet{balcan2018learning} (first appearing in ICML 2018), who analyzed learning \emph{linear variable selection} policies. They showed that the B\&C tree size is a piecewise constant function of the linear weights, enabling pseudo-dimension bounds. Subsequently, \citet{balcan2021sample} extended this line of research significantly. They considered learning parameters for generating Chv\'atal-Gomory (CG) cuts and policies selecting cuts based on weighted combinations of standard scoring rules. They showed that sequential CG cuts induce a piecewise structure defined by multivariate polynomials, and that weighted scoring rules lead to piecewise constant behavior with respect to the weights. Importantly, \citet{balcan2021sample} also introduced and analyzed a \emph{general model of tree search}, providing sample complexity bounds for simultaneously tuning multiple components (like node, variable, and cut selection) when guided by \emph{linear scoring functions}. Further refinements came from \citet{balcan2021improved}, who exploited the \emph{path-wise} nature of many common B\&C scoring rules within their general tree search model to derive exponentially sharper bounds, dependent on tree depth rather than total nodes, for policies using such rules. Addressing the challenge of infinite cut families, \citet{balcan2022structural} analyzed the learnability of Gomory Mixed-Integer (GMI) cut parameters, establishing a piecewise structure defined by degree-10 polynomial hypersurfaces.

More recently, theoretical work has begun to incorporate nonlinear models used in practice. \citet{Cheng2024Sample} studied the sample complexity of using neural networks to map instances directly to algorithm parameters (specifically for cut selection), analyzing the complexity in terms of the neural network weights. Expanding on the types of cuts considered, \citet{Cheng2024Learning} investigated learning parameters for general \emph{Cut Generating Functions (CGFs)}. CGFs provide a broad theoretical framework for deriving cutting planes, generalizing classical families like CG and GMI cuts (see~\citep{basu2015geometric} for a survey), and thus represent a promising avenue for enhancing B\&C performance. \citet{Cheng2024Learning} derived sample complexity bounds for learning parameters within specific CGF families and also explored learning instance-dependent CGF parameters using neural networks.

\subsection{Our contribution} The present paper contributes to this theoretical foundation by establishing rigorous sample complexity bounds for learning B\&C policies where the scoring functions $f_k(s, a, \cdot)$ guiding each decision step (node/cut/branching variable selection) belong to the class of \emph{piecewise polynomial functions} with respect to the parameters $\w$. This class, formalized in \cref{def:gamma-beta-structure}, provides a unifying lens. It naturally includes the \emph{linear} models studied previously \citep{balcan2018learning, balcan2021sample} and, as shown below in \cref{prop:LinearPseudoDimension}, our results improve upon the existing bounds for this case \citep[Theorem 5.2]{balcan2021sample}. Crucially, this framework also encompasses the \emph{neural network} architectures (e.g., MLPs with ReLU activations) prevalent in modern empirical work described above in \cref{sec:related-empirical-work}. \cref{lem:MLP-PWP-structure} below, building on \citet{anthony1999neural} and \citet{bartlett2019nearly}, confirms that such networks yield scoring functions with the required piecewise polynomial structure.

By establishing that piecewise polynomial scoring functions induce a related piecewise structure in the overall cost metric $V(I, \cdot)$ (\cref{thm:PWPactions_Discrete}), our work bridges the gap between prior theory (largely focused on linear scoring rules \citep{balcan2021sample, balcan2021improved, balcan2018learning}) and current practice involving neural networks. We derive pseudo-dimension bounds (\cref{prop:LinearPseudoDimension,prop:MLPPseudoDimension}) applicable to both linear and neural network-based policies, leading to distribution-independent sample complexity guarantees via \eqref{eq:uniform-convergence-pdim}. The identified structure also opens the door to potentially tighter, data-dependent guarantees via empirical Rademacher complexity (\cref{lem:PWP_Rademacher}). 

Our techniques apply to more general sequential decision making problems beyond branch-and-cut. We develop this more general theory first in \cref{sec:general-framework} and then derive the consequences for branch-and-cut in \cref{sec:B&C-results}.

\section{General Framework and Sample Complexity Bounds}\label{sec:general-framework}
We consider a general setting involving an iterative procedure operating on a state $s$ from a state space $\S$. The procedure starts from an initial state $s_0 \in \S$, which is typically derived from an initial problem instance $I \in \I$. The procedure may consist of multiple rounds (e.g., corresponding to processing nodes in a B\&C tree), where each round involves executing $d$ distinct types of actions sequentially. For any state $s \in \S$, there exists a set $\A_k^s$ of available actions of action type $k \in [d]$. A transition function determines the next state $s'$ if action $a \in \A_k^s$ is selected in state $s$. The selection of an action $a^* \in \A_k^s$ is guided by a parameterized scoring function $f_k: \S \times \A_k \times \W_k \rightarrow \R_+$, where $\A_k = \bigcup_{s \in \S} \A_{k}^s$. The function $f_k$ takes the current state $s$, a candidate action $a$, and a parameter vector $\w^k \in \W_k \subseteq \R^{W_k}$ as input and returns a score. Fixing the parameters $\w^1, \ldots, \w^d$ determines a decision policy, and the goal is to select a good policy, i.e., a good set of parameters, as determined by penalty functions $P_k: \S \times \A_k \times \N \to \R$ for each action type $k$. The value $P_k(s, a, i)$ represents the penalty obtained when taking action $a$ in state $s$ and round $i$. \cref{alg:general_decision_process} outlines this general process, including penalty accumulation, employing a greedy action selection strategy based on the scoring functions at each step.

\begin{algorithm}[H]
    \caption{A Sequential Decision Process that Generalizes B\&C}
    \label{alg:general_decision_process}
    \begin{algorithmic}[1]
    \Require Initial state $s_0 \in \S$, terminal states $\bar{\S}\subseteq\S$, policy parameters $\w^k \in \W_k$, scoring functions $f_k: \S \times \A_k \times \W_k \to \R$, penalty functions $P_k: \S \times \A_k \times \N \to \R$ for $k \in [d]$, max rounds $M$.
    \State Initialize $s \gets s_0$, $i \leftarrow 0$, $V \leftarrow 0$.
    \While{$s \notin \bar{\S}$ and $i < M$}
        \For{$k = 1$ to $d$}
            \State Determine available actions $\A_{k}^s$ for state $s$.
            \If{$\A_{k}^s = \emptyset$} \textbf{continue} \EndIf
            \State Select action $a^* \gets \argmax_{a \in \A_{k}^s} f_k(s, a, \w^k)$. Break ties by lexicographic order.
            \State Accumulate penalty $V \leftarrow V + P_k(s, a^*, i)$.
            \State Compute the state $s'$ that results from applying action $a^*$ in state $s$.
            \State Update state $s \leftarrow s'$.
            \If{$s \in \bar{\S}$} \textbf{break} \EndIf
        \EndFor
        \State $i \leftarrow i + 1$.
    \EndWhile
    \Ensure $s$ and $V$.
    \end{algorithmic}
\end{algorithm}

    \subsection{Worst-case sample complexity bounds}\label{sec:general-framework-worst-case}
    Let $V(I, \w)$ denote the total penalty (overall cost) obtained when \cref{alg:general_decision_process} is executed starting from an initial state derived from instance $I \in \I$, using a policy parameterized by $\w = (\w^1,\ldots,\w^d)$, where each component $\w^k \in \W_k \subseteq \R^{W_k}$ provides the parameters for the $k$-th action type's scoring function. The overall policy parameter space is $\W = \prod_{k=1}^d \W_k \subseteq \R^{W}$, with $W = \sum_{k=1}^{d}W_k$ being the total number of parameters. This overall cost $V(I, \w)$ serves as a measure of the policy's effectiveness (e.g., runtime), and the goal is to find $\w$ that minimizes $\E_{I \sim \D}[V(I, \w)]$ (see \cref{sec:prior-work}). To analyze the sample complexity of learning $\w$ via i.i.d samples from $\D$, we aim to bound the complexity of the function class $\V = \{V(\cdot, \w) : \I \to [0,H] \mid \w \in \W\}$, typically via its pseudo-dimension $\Pdim(\V)$. Our theoretical analysis relies on specific structural properties of the scoring functions $f_k$ employed at each step of \cref{alg:general_decision_process}. To formalize this, let us first define a general structural property that has a piecewise behavior.

    \begin{definition}\label{def:gamma-beta-structure}
    Let $\G$ be a class of real-valued functions defined on a domain $\X \subseteq \R^\ell$ for some $\ell \in \N$. We say $\G$ has a \emph{$(\Gamma, \gamma, \beta)$-structure} if for any finite collection of $N$ functions $g_1, \ldots, g_N \in \G$ with $N \geq \gamma$, the domain $\X$ can be partitioned into at most $N^\gamma \Gamma$ disjoint regions such that within each region, every function $g_j:\X \rightarrow \R$ ($j=1, \ldots, N$) is a fixed polynomial of degree at most $\beta$.
    \end{definition}

    Let $\F_k^* = \{ f_k(s, a, \cdot) : \W_k \to \R_+ \mid (s,a) \in \S \times \A_k \}$ denote the class of functions, indexed by state-action pairs $(s,a)$, derived from the scoring functions for action type $k \in [d]$. We assume that $\F_k^*$ exhibits such a $(\Gamma_k, \gamma_k, \beta_k)$-structure over the parameter space $\W_k$ for all $k \in [d]$.

    Furthermore, we assume the number of available actions for any state $s$ and action type $k$ is uniformly bounded: $|\A_k^s| \le \rho_k$ for constants $\rho_k \geq 2$.

    The focus on scoring functions arising from classes $\F_k^*$ possessing this $(\Gamma_k, \gamma_k, \beta_k)$-structure is well-founded for two main reasons:
    \begin{enumerate}
        \item This structure naturally includes linear policies ($f_k^{\text{L}}(s,a,\w) = (\w^k)^T \phi_k(s,a)$ using fixed feature extractors $\phi_k$). For any fixed state-action pair $(s,a)$, the function $f_k^{\text{L}}(s, a, \cdot)$ is a single polynomial (in $\w$) of degree $\beta=1$ over the entire parameter space $\W$. Thus, the corresponding class $(\F_k^{\text{L}})^*$ has a $(1,0,1)$-structure. Such linear scoring models were investigated in \citep{balcan2021sample} (which generalizes \citep{balcan2018learning}).
        \item This structural assumption allows us to analyze modern deep learning practices. If scoring functions are implemented as MLPs with piecewise polynomial activations, 
        the resulting function class $(\F_k^{\text{MLP}})^*$ possesses a $(\Gamma_k, \gamma_k, \beta_k)$-structure. \cref{lem:MLP-PWP-structure} formally establishes this, detailing the specific parameters $\Gamma_k, \gamma_k, \beta_k$ based on the MLP's characteristics like depth, width, degree of activation etc. This confirms our theoretical framework's applicability to these widely used modern models, aligning with empirical research discussed in \cref{sec:related-empirical-work}.
    \end{enumerate}

    The following lemma connects the pseudo-dimension of the function class $\H = \{ h(\cdot,\w):\I \rightarrow \R \mid \w \in \W\}$ to the structural properties of its dual class $\H^* = \{ h(I,\cdot):\W \rightarrow \R \mid I \in \I \}$, where $h:\I \times \W \rightarrow \R$ is a general function mapping instance-parameter pairs to outputs. Specifically, it bounds $\Pdim(\H)$ based on the $(\Gamma, \gamma, \beta)$-structure of $\H^*$. 

    \begin{lemma}\label{lem:PWP_PseudoDimension}
        Let $h:\I \times \W \rightarrow \R$, where $\W \subseteq \R^W$ for some $W \in \N_+$. If $\H^*$ has a $(\Gamma,\gamma,\beta)$-structure with $(\Gamma,\gamma,\beta) \in \N_+ \times \N \times \N$, then the pseudo-dimension of $\H$ satisfies:
        \begin{equation*}
            \Pdim(\H) \leq 4\left( \gamma \log(2\gamma+1)+ W \log(4e \beta+1) + \log(2\Gamma) \right).
        \end{equation*}
    \end{lemma}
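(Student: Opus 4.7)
The plan is to bound $\Pdim(\H)$ via the standard shattering characterization combined with sign-pattern counting. Recall that $\Pdim(\H) \geq N$ if and only if there exist instances $I_1,\ldots,I_N \in \I$ and witnesses $r_1,\ldots,r_N \in \R$ such that the map $\w \mapsto (\sgn(h(I_j,\w)-r_j))_{j=1}^N$ realizes all $2^N$ binary patterns as $\w$ ranges over $\W$. I would fix such a shattered sample and view the $N$ functions $g_j(\w) := h(I_j,\w) - r_j$ as constant shifts of members of $\H^*$. Assuming $N \geq \gamma$ (the regime $N < \gamma$ contributes at most $\gamma$ to $\Pdim(\H)$ and is absorbed by the $\gamma\log(2\gamma+1)$ term in the conclusion), the $(\Gamma,\gamma,\beta)$-structure hypothesis then partitions $\W$ into at most $N^\gamma \Gamma$ disjoint regions on which every $g_j$ coincides with a single polynomial of degree at most $\beta$ in the $W$-dimensional parameter $\w$; note that subtracting the constant $r_j$ does not disturb the polynomial/degree structure or the partition.

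Inside any single region of this partition, I have $N$ real polynomials of degree at most $\beta$ in $W$ variables, and a classical sign-pattern bound (Warren's theorem, or Milnor--Thom, as packaged in the Goldberg--Jerrum/Anthony--Bartlett style) bounds the number of realized sign patterns by roughly $(c e N \beta/W)^W$ for an absolute constant $c$ whenever $N \geq W$. Summing over the at most $N^\gamma\Gamma$ regions yields a total of at most $\Gamma\cdot N^{\gamma+W}(c e\beta/W)^W$ distinct sign patterns over $\W$. Imposing the shattering requirement $2^N \leq \Gamma\cdot N^{\gamma+W}(c e\beta/W)^W$ and taking logarithms produces an implicit inequality of the form
\begin{equation*}
    N \;\leq\; (\gamma+W)\log N + W\log(c e\beta/W) + \log(2\Gamma).
\end{equation*}
I would then apply a standard ``solve-for-$N$'' technical lemma in the spirit of Lemma~16 of Bartlett--Harvey--Liaw--Mehrabian~(2019), splitting the $(\gamma+W)\log N$ coefficient between a $\gamma$-branch and a $W$-branch so that the two logarithmic terms decouple into $\gamma\log(2\gamma+1)$ and $W\log(4e\beta+1)$, picking up the overall factor $4$ in the process.

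The conceptually routine pieces are the shattering reduction, the invocation of the dual partition, and the Warren-style sign-pattern count, all of which are standard in the neural-network VC-dimension literature. The main obstacle is the final inversion step: the stated bound has very specific constants ($2\gamma+1$, $4e\beta+1$, the factor $4$), and reaching them requires applying the inversion lemma delicately, effectively twice---once for the $\gamma$-branch and once for the $W$-branch---so that the two logarithmic terms separate cleanly. One also has to verify the degenerate regimes ($N<\gamma$, $W=0$, $\beta=0$, or $N<W$) where Warren's bound or the partition bound degenerates, but in each such case the RHS of the target inequality dominates trivially.
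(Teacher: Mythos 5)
Your proposal is correct and follows essentially the same route as the paper's proof: reduce to sign-pattern counting over the shattered sample, invoke the $(\Gamma,\gamma,\beta)$-structure to partition $\W$ into at most $N^\gamma\Gamma$ regions, apply the Anthony--Bartlett polynomial sign-pattern bound within each region, and invert the resulting inequality $2^N \leq 2N^\gamma\Gamma(2eN\beta/W)^W$ by splitting the $\log N$ coefficient between a $\gamma$-branch and a $W$-branch (the paper uses its auxiliary inequality $\log a \leq a/b + \log(b/e)$ twice with $b=2e\gamma+1$ and $b=4e^2\beta$, which is exactly the decoupling you describe). Your treatment of the degenerate regimes, including the separate $\beta=0$ case that the paper also handles explicitly, is likewise consistent with the paper's argument.
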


\begin{theorem}\label{thm:PWPactions_Discrete}
    Consider the sequential decision process defined in \cref{alg:general_decision_process}. Assume each scoring function class $\F_k^*$ has a $(\Gamma_k, \gamma_k, \beta_k)$-structure with $(\Gamma_k,\gamma_k,\beta_k)\in\N_+ \times \N \times \N_+$, and $|\A_k^s| \le \rho_k$. Let $\widetilde{\gamma} = \sum_{k=1}^{d} \gamma_k$, $\bar{\rho} = \prod_{k=1}^{d} \rho_k$, $\bar{\Gamma} = \prod_{k=1}^{d} \Gamma_k$, and recall that $W = \sum_{k=1}^{d}W_k$. Then, $\V^*$ has a $(\Gamma', \gamma', 0)$-structure with 
    $\Gamma' = 2^{d} \bar{\rho}^{(\widetilde{\gamma}+W)(M+1)} \bar{\Gamma} \left( {e \sum_{k=1}^{d} \rho_k^2 \beta_k}/{W} \right)^{W}$ and $\gamma' = \widetilde{\gamma}+W$. 
\end{theorem}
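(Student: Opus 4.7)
The plan is to show that, for any fixed sample $I_1,\dots,I_N$ of $N$ instances, the parameter space $\W$ admits a partition into at most $N^{\widetilde{\gamma}+W}\Gamma'$ regions on each of which $V(I_j,\cdot)$ is a fixed constant for every $j \in [N]$. This suffices for the $(\Gamma',\widetilde{\gamma}+W,0)$-structure of $\V^*$. The reduction hinges on the observation that $V(I,\w)$ depends on $\w$ only through the full trajectory of state-action pairs traversed by \cref{alg:general_decision_process}, so it is enough to force every such trajectory to be $\w$-independent on each region of the partition.

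\textbf{Cataloguing all potentially visited scoring functions.} The state at a decision point depends on $\w$ through the past actions, but the combinatorial decision tree rooted at the initial state on instance $I_j$ has depth at most $(M+1)d$ and branches by at most $\rho_k$ at an action-type-$k$ node, so it has at most $\bar{\rho}^{M+1}$ distinct root-to-leaf state sequences. Consequently, across all $\w\in\W$ and all $j\in[N]$, the state-action pairs that can ever arise at an action-type-$k$ decision are catalogued by a collection of at most $N\bar{\rho}^{M+1}\rho_k$ elements of $\F_k^*$. Applying the $(\Gamma_k,\gamma_k,\beta_k)$-structure to this catalogue partitions $\W$ into at most $(N\bar{\rho}^{M+1}\rho_k)^{\gamma_k}\Gamma_k$ cells on which every catalogued scoring function is a polynomial in $\w^k$ of degree at most $\beta_k$. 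Intersecting across $k\in[d]$ yields at most $(N\bar{\rho}^{M+1})^{\widetilde{\gamma}}\,\bar{\Gamma}\,\prod_k \rho_k^{\gamma_k}$ joint polynomial cells of $\W$.

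\textbf{Sign patterns and the final count.} On any such polynomial cell, the argmax at each decision reduces to the sign pattern of the pairwise differences $f_k(s,a,\cdot)-f_k(s,a',\cdot)$, each a polynomial of degree at most $\beta_k$ in the $W=\sum_k W_k$ coordinates of $\w$. Summed across all action types, all catalogued states, and all $N$ instances, the number of such comparison polynomials is of order $N\bar{\rho}^{M+1}\sum_k \binom{\rho_k}{2}$, and their degree-weighted size is at most $\tfrac{1}{2}N\bar{\rho}^{M+1}\sum_k \rho_k^2 \beta_k$. Invoking the Warren--Milnor bound on sign patterns of real polynomials in $W$ variables, each polynomial cell further partitions into at most $\bigl(eN\bar{\rho}^{M+1}\sum_k \rho_k^2 \beta_k / W\bigr)^{W}$ sign-constant subregions. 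On each resulting subregion every argmax is fixed, so every trajectory is fixed for every instance, and $V(I_j,\cdot)$ is constant. Multiplying the two refinement factors and regrouping, all terms scaling with $N$ collect into $N^{\widetilde{\gamma}+W}$, giving $\gamma' = \widetilde{\gamma}+W$, while $\bar{\Gamma}$, the combinatorial inflation $\bar{\rho}^{(\widetilde{\gamma}+W)(M+1)}$, the Warren factor $\bigl(e\sum_k \rho_k^2\beta_k/W\bigr)^{W}$, together with a $2^d$ absorbed from minor bookkeeping (empty $\A_k^s$, tie-breaking, and embedding $\prod_k \rho_k^{\gamma_k}$ into the $\bar{\rho}$ power), produce the claimed $\Gamma'$.

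\textbf{Main obstacle.} The chief difficulty is the tight coupling between the state and the parameter through the recursion of \cref{alg:general_decision_process}: a naive round-by-round refinement would invoke the structural bound and Warren's theorem at each of the $Md$ decision points, compounding factors of $N$ across rounds and yielding an $N$-exponent that grows with $M$, which is far weaker than the desired $\widetilde{\gamma}+W$. The decisive manoeuvre is to instead catalogue upfront the at-most $\bar{\rho}^{M+1}$ possible state-action pairs per instance, so that the $(\Gamma_k,\gamma_k,\beta_k)$-structure and Warren's bound are each applied a single time per action type; the combinatorial price of this enumeration sits inside a single $\bar{\rho}^{M+1}$ factor that multiplies the effective number of polynomials rather than compounding multiplicatively with $N$ through the $M$ rounds.
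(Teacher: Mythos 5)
Your proposal is correct and follows essentially the same route as the paper: catalogue upfront the at most $O(\rho_k\bar{\rho}^{M})$ type-$k$ state-action pairs per instance (the paper's \cref{lemma:distinct_state_action_pairs}), apply the $(\Gamma_k,\gamma_k,\beta_k)$-structure once per action type to get polynomial cells, then fix every argmax by bounding sign patterns of the pairwise score differences (the paper's \cref{lem:PWP_InstanceDependent}, which uses \cref{lem:poly-decomp} per type plus the weighted AM--GM inequality rather than your single degree-weighted Warren invocation). The only quibble is an off-by-one in your enumeration: the decision tree over $M$ rounds has at most $\bar{\rho}^{M}$ (not $\bar{\rho}^{M+1}$) leaves, and with your looser count the $\prod_k\rho_k^{\gamma_k}$ factor can no longer be absorbed into $\bar{\rho}^{(M+1)\widetilde{\gamma}}$, so your constants land slightly above the stated $\Gamma'$.
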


\cref{thm:PWPactions_Discrete} establishes the crucial property that if the scoring function classes $\F_k^*$ possess a $(\Gamma_k, \gamma_k, \beta_k)$-structure, then the resulting cost function dual class $\V^*$ exhibits a $(\Gamma', \gamma', 0)$-structure, i.e.,  a piecewise constant structure. This result, when combined with \cref{lem:PWP_PseudoDimension}, yields bounds on the pseudo-dimension $\Pdim(\V)$. These bounds, in turn, enable the derivation of sample complexity guarantees through standard uniform convergence results, such as \eqref{eq:uniform-convergence-pdim}. See a more detailed discussion of the uniform convergence results in \cref{sec:UniformConvergenceComparison}.

As a specific application, consider the case of linear scoring functions:
\begin{proposition}\label{prop:LinearPseudoDimension}
    Let $\V^{\text{L}}$ be the cost function class obtained when using linear scoring functions $f_k(s, a, \w^k) = (\w^k)^\T \phi_k(s,a)$ for all $k \in [d]$. Then,
    $\Pdim\left(\V^{\text{L}}\right) = \O\left(W M \sum_{k=1}^{d} \log \rho_k\right).$
\end{proposition}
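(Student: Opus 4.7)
The plan is to combine the earlier general-framework results (Theorem~\ref{thm:PWPactions_Discrete} and Lemma~\ref{lem:PWP_PseudoDimension}) with the observation that the linear case gives the simplest possible scoring-class structure.

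First, I would verify that for linear scoring, each class $\F_k^{\text{L},*}$ has a $(1,0,1)$-structure. Indeed, for any fixed pair $(s,a) \in \S \times \A_k$, the map $\w^k \mapsto (\w^k)^\T \phi_k(s,a)$ is a single polynomial in $\w^k$ of degree at most $1$ on the \emph{entire} parameter space $\W_k$. No partitioning is needed, so for any finite collection of functions in $\F_k^{\text{L},*}$ the trivial partition into one region certifies the $(1,0,1)$-structure according to Definition~\ref{def:gamma-beta-structure}. Thus I can invoke Theorem~\ref{thm:PWPactions_Discrete} with $\Gamma_k = 1$, $\gamma_k = 0$, $\beta_k = 1$ for every $k \in [d]$.

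Plugging these parameters in: $\widetilde{\gamma} = 0$, $\bar{\Gamma} = 1$, $\bar{\rho} = \prod_{k=1}^d \rho_k$, and the theorem delivers that $(\V^{\text{L}})^*$ has a $(\Gamma', W, 0)$-structure with
\begin{equation*}
    \Gamma' \;=\; 2^{d}\,\bar{\rho}^{\,W(M+1)} \left( \frac{e \sum_{k=1}^d \rho_k^2}{W} \right)^{W}.
\end{equation*}
Applying Lemma~\ref{lem:PWP_PseudoDimension} with $(\Gamma, \gamma, \beta) = (\Gamma', W, 0)$ and noting that $\log(4e\beta+1) = 0$ when $\beta=0$, I obtain
\begin{equation*}
    \Pdim\!\left(\V^{\text{L}}\right) \;\le\; 4\bigl( W \log(2W+1) + \log(2\Gamma') \bigr).
\end{equation*}

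Finally, I would expand $\log(2\Gamma') = (d+1)\log 2 + W(M+1)\sum_{k=1}^d \log \rho_k + W \log\!\bigl(e \sum_{k=1}^d \rho_k^2 / W\bigr)$ and simplify. The dominant contribution is the middle term, $W(M+1)\sum_k \log \rho_k = \O(WM \sum_k \log \rho_k)$. The residual terms $W \log(e\sum_k \rho_k^2 / W)$ and $W\log(2W+1)$ are each $\O(W \log(d\,\rho_{\max}))$, which is dominated by $\O(WM \sum_k \log \rho_k)$ under the standing assumption $\rho_k \ge 2$ and $d \le W$ (since each $W_k \ge 1$). The additive $(d+1)\log 2$ is negligible. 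The only subtlety I expect is this last big-$\O$ absorption: one needs to check that $W \log W$ and the $\log(e\sum \rho_k^2/W)$ factor are genuinely swallowed by $WM \sum_k \log \rho_k$ in the asymptotic regime of interest; this is routine but is the one place where care with the constants is needed. Collecting all terms yields the claimed bound $\Pdim(\V^{\text{L}}) = \O\!\left(WM\sum_{k=1}^d \log \rho_k\right)$.
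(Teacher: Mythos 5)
Your setup is exactly right and matches the paper's route: the $(1,0,1)$-structure for $(\F_k^{\mathrm{L}})^*$, the specialization of \cref{thm:PWPactions_Discrete} to $\widetilde{\gamma}=0$, $\bar{\Gamma}=1$, $\beta_k=1$, and the application of \cref{lem:PWP_PseudoDimension} with $\beta=0$ to get $\Pdim(\V^{\mathrm{L}}) \le 4\bigl(W\log(2W+1) + \log(2\Gamma')\bigr)$ are all correct. The gap is in the very last absorption step, and it is not a routine constant-chasing issue — it is the entire point of the proposition. Your claim that $W\log(2W+1) = \O\bigl(W\log(d\,\rho_{\max})\bigr)$ is false: $W$ is the total number of scoring-function parameters and is completely unrelated to $d$ and the action-set sizes $\rho_k$, so $W\log W$ cannot be bounded by $W\sum_k\log\rho_k$ (nor by $WM\sum_k\log\rho_k$) in general. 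If you simply drop the term as "dominated," you only recover the weaker bound $\O\bigl(WM\sum_k\log\rho_k + W\log W\bigr)$, which is exactly the prior bound of Balcan et al.\ that the accompanying remark says this proposition improves upon.

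The missing idea is a cancellation, not an absorption. Expanding
\begin{equation*}
    \log(2\Gamma') = (d+1)\log 2 + W(M+1)\sum_{k=1}^{d}\log\rho_k + W\log\Bigl(e\sum_{k=1}^{d}\rho_k^2\Bigr) - W\log W,
\end{equation*}
the $-W\log W$ coming from the denominator of $\bigl(e\sum_k\rho_k^2/W\bigr)^W$ pairs with the $W\log(2W+1)$ term from \cref{lem:PWP_PseudoDimension} to give $W\log\bigl((2W+1)/W\bigr) \le W\log 3$, which is $\O(W)$ and hence harmless. The remaining term $W\log\bigl(e\sum_k\rho_k^2\bigr)$ is legitimately $\O\bigl(W\sum_k\log\rho_k\bigr)$ (using $\rho_k\ge 2$, so $\sum_k\rho_k^2 \le \prod_k\rho_k^2$), and is swallowed by $WM\sum_k\log\rho_k$ since $M\ge 1$. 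With that cancellation in place your argument closes; without it, the stated bound does not follow.
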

\begin{remark}
    \cref{prop:LinearPseudoDimension} improves upon the bound established in Theorem 5.2 by \citet{balcan2021sample} for linear scoring functions: $\Pdim(\V^{\text{L}}) = \O\left( WM \sum_{k=1}^{d} \log \rho_k + W \log W \right)$. This improvement stems from our analysis technique, which leverages the particular structure inherent in our problem setting, rather than relying on the theorems presented in \citet{balcan2021much} (see a related discussion in Appendix E, \cite{bartlett2022generalization}).
\end{remark}

Turning to nonlinear models, particularly neural networks, we first establish in \cref{lem:MLP-PWP-structure} that scoring functions implemented via MLPs (with suitable activations) satisfy the required structural property. This allows us to apply our general framework (\cref{thm:PWPactions_Discrete} and \cref{lem:PWP_PseudoDimension}) to derive pseudo-dimension bounds, and thus sample complexity results, for policies modeled by MLPs.

\begin{definition}\label{def:DNN}
A Multi Layer Perceptron (MLP) computes $\MLP(\x, \w): \R^d \times \R^W \to \R^\ell$ via the composition $\MLP(\x, \w) = (T_{L+1, \w} \circ \sigma^{(\cdot)} \circ T_{L, \w} \circ \cdots \circ \sigma^{(\cdot)} \circ T_{1, \w})(\x)$. Here $T_{i, \w}$ denote affine transformations parameterized by $\w \in \R^W$, and $\sigma^{(\cdot)}$ denotes the element-wise application of the activation function $\sigma: \R \to \R$. The network has $L \ge 1$ hidden layers, $U$ total hidden neurons, and $W$ total parameters. We assume the activation $\sigma$ is piecewise polynomial: its domain $\R$ can be partitioned into at most $p$ disjoint intervals such that, within each interval, $\sigma$ is defined by a univariate polynomial of degree at most $\alpha$.
\end{definition}

\begin{lemma}\label{lem:MLP-PWP-structure}
   Consider an MLP as defined in \cref{def:DNN}, characterized by $W, L, U, p, \alpha$. Then, the class $\{\MLP(\x, \cdot) : \R^W \to \R \mid \x \in \R^d\}$ has a $\left( 2^L \alpha^{L^2W}\left(2 e p U /W\right)^{LW}, LW, L\alpha^L \right)$-structure.
\end{lemma}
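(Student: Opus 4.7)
The plan is to invoke the classical layer-by-layer partition argument for piecewise polynomial neural networks, of the kind used in Anthony--Bartlett and Bartlett--Harvey--Liaw--Mehrabian, and re-express the resulting bound in the $(\Gamma, \gamma, \beta)$-structure language of \cref{def:gamma-beta-structure}. Fix $N$ inputs $\x_1, \dots, \x_N \in \R^d$. The goal is to partition $\R^W$ into regions on which, for every $j \in [N]$, the map $\w \mapsto \MLP(\x_j, \w)$ coincides with a single polynomial of degree at most $L\alpha^L$. I would build this partition layer by layer: after processing layers $1, \dots, \ell$, each region corresponds to a fixed activation pattern specifying which of the at most $p$ polynomial pieces of $\sigma$ each neuron in layers $1, \dots, \ell$ lies in, for every one of the $N$ inputs.

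First I would control the polynomial degree of the output within any single region. Let $\rho_\ell$ be the degree (in $\w$) of the layer-$\ell$ post-activation, with $\rho_0 = 0$. Since the pre-activation at layer $\ell$ multiplies the layer-$\ell$ weights (degree $1$ in $\w$) by the previous activation (degree $\rho_{\ell-1}$), it has degree at most $\rho_{\ell-1} + 1$; applying a polynomial piece of $\sigma$ of degree at most $\alpha$ then gives $\rho_\ell \leq \alpha(\rho_{\ell-1}+1)$. Solving this recurrence yields $\rho_\ell \leq \alpha + \alpha^2 + \cdots + \alpha^\ell$, and applying the affine output map $T_{L+1,\w}$ bumps the final degree up to $\rho_L + 1 \leq L\alpha^L$, giving $\beta = L\alpha^L$.

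Next I would count regions by Warren's theorem applied layer by layer. At layer $\ell$, within any region of the existing partition, each of the $N U_\ell$ pre-activations is a polynomial in $\w$ of degree at most $\tilde\rho_\ell := \rho_{\ell-1} + 1 \leq \alpha^\ell$, and the $p-1$ breakpoints of $\sigma$ carve out $N U_\ell(p-1)$ polynomial surfaces of this degree. Warren's theorem bounds the number of sign-pattern regions they create on $\R^W$ by $(4e\alpha^\ell N U_\ell p/W)^W$, and multiplying this refinement factor across $\ell = 1, \dots, L$, then simplifying via $\sum_{\ell=1}^L \ell \leq L^2$ and the AM-GM inequality $\prod_{\ell=1}^L U_\ell \leq (U/L)^L$, bounds the total number of regions by $N^{LW} \cdot 2^L \alpha^{L^2 W}(2epU/W)^{LW}$, matching the claimed $\Gamma$ with $\gamma = LW$.

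The main obstacle is the careful algebraic bookkeeping to match the exact constants $2^L$, $\alpha^{L^2 W}$, and $(2epU/W)^{LW}$ in $\Gamma$: the degree recurrence produces a geometric-series-like bound rather than a clean product, so small slack factors must be absorbed into the leading constants. Secondary issues include the edge regime $N < LW$ (where the claim is essentially vacuous), ensuring that Warren's theorem applies multiplicatively across layers (which holds because the sign-pattern count on all of $\R^W$ already dominates the refinement count within any one existing region), and handling degenerate cases such as $\alpha = 1$ where the degree recurrence collapses to $\rho_\ell \leq \rho_{\ell-1} + 1$.
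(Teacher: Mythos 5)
Your proposal follows essentially the same route as the paper: the paper's proof simply invokes the layer-by-layer partition argument from Theorem 7 of Bartlett--Harvey--Liaw--Mehrabian (sign-pattern counting via \cref{lem:poly-decomp} at each layer, a degree recurrence for the activations, and an aggregation of the per-layer region counts), which is exactly the argument you reconstruct. One intermediate inequality in your sketch needs repair: the claim that the layer-$\ell$ pre-activation has degree $\tilde\rho_\ell \le \alpha^\ell$ is false for $\alpha=1$ (the ReLU case, where $\tilde\rho_\ell = \ell$), so the per-layer degree should instead be bounded uniformly by something like $\ell\alpha^{\ell-1}\le L\alpha^L$; the extra factors of $\ell$ (or the uniform $L$) are then absorbed by applying AM--GM to $\prod_\ell (\ell U_\ell)$ or $\prod_\ell U_\ell$ together with the $1/L$ from the denominator, which is precisely how the paper's aggregated expression $\bigl(2eNp\sum_i U_i(1+(i-1)\alpha^{i-1})/\sum_i W_i\bigr)^{\sum_i W_i}$ recovers the stated $\Gamma$. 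With that fix your bookkeeping lands on the same $\left(2^L\alpha^{L^2W}(2epU/W)^{LW},\, LW,\, L\alpha^L\right)$-structure.
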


\begin{proposition}\label{prop:MLPPseudoDimension}
    Let $\V^{\text{MLP}}$ be the cost function class obtained when using MLP scoring functions $f_k(s, a, \w^k) = \MLP_k(\phi_k(s,a), \w^k)$ for all $k \in [d]$. Then,
    {\scriptsize\begin{equation*}
        \Pdim\left(\V^{\text{MLP}}\right) = \O \left( \left(\sum_{k=1}^d L_kW_k \right) \left(M \sum_{k=1}^{d}\log \rho_k+\log\left( \sum_{k=1}^{d}p_kU_k \right)\right)+W \log\left(\sum_{k=1}^{d}\alpha_k^{L_k}\right) + \sum_{k=1}^{d} L_k^2 W_k \log \alpha_k\right). 
    \end{equation*}}
    Specifically, for ReLU MLPs, we have $\alpha_k = 1$ and $p_k = 2$ for all $k \in [d]$, leading to the following simplified bound:
    \begin{equation*}
        \Pdim\left( \V^{\text{ReLU}} \right) = \O \left( \left(\sum_{k=1}^d L_kW_k \right) \left(M \sum_{k=1}^{d}\log \rho_k+\log\left( \sum_{k=1}^{d}U_k \right)\right)\right). 
    \end{equation*}
\end{proposition}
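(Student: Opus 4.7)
The plan is to compose the three building blocks already established in the excerpt: Lemma \ref{lem:MLP-PWP-structure} supplies the piecewise-polynomial structure of each MLP scoring function class, Theorem \ref{thm:PWPactions_Discrete} lifts these individual structures into a piecewise-constant structure on the dual cost class $\V^*$, and Lemma \ref{lem:PWP_PseudoDimension} then converts a $(\Gamma',\gamma',0)$-structure into a pseudo-dimension bound. The bulk of the work is then just asymptotic bookkeeping of the parameters.

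First, for each action type $k\in[d]$, I would instantiate Lemma \ref{lem:MLP-PWP-structure} with input dimension $\ell_k$ and parameter dimension $W_k$ applied to the MLP $\MLP_k(\phi_k(\cdot,\cdot),\cdot)$. Since $\phi_k$ is a fixed feature extractor, the dual family $\F_k^* = \{f_k(s,a,\cdot) : (s,a)\in\S\times\A_k\}$ is contained in $\{\MLP_k(\x,\cdot) : \x\in\R^{\ell_k}\}$ and therefore inherits the $(\Gamma_k,\gamma_k,\beta_k)$-structure with
\[
\Gamma_k = 2^{L_k}\alpha_k^{L_k^2 W_k}\bigl(2ep_kU_k/W_k\bigr)^{L_kW_k},\qquad \gamma_k = L_kW_k,\qquad \beta_k = L_k\alpha_k^{L_k}.
\]

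Next, I would plug these parameters into Theorem \ref{thm:PWPactions_Discrete}, which gives a $(\Gamma',\gamma',0)$-structure on $\V^*$ with $\gamma' = \widetilde{\gamma}+W = \sum_k L_kW_k + W = \Theta(\sum_k L_kW_k)$ (using $L_k\ge 1$) and
\[
\Gamma' = 2^{d}\,\bar\rho^{(\widetilde{\gamma}+W)(M+1)}\,\bar\Gamma\,\Bigl(e\sum_{k=1}^{d}\rho_k^2\beta_k/W\Bigr)^{W}.
\]
Because the resulting structure has polynomial degree $0$, Lemma \ref{lem:PWP_PseudoDimension} simplifies to $\Pdim(\V^{\text{MLP}}) \le 4\bigl(\gamma'\log(2\gamma'+1) + \log(2\Gamma')\bigr)$, so it only remains to expand $\log\Gamma'$ and collect terms.

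Finally, I would bound $\log\Gamma'$ by summing four contributions: (i) $(\widetilde\gamma+W)(M+1)\log\bar\rho = O\bigl((\sum_k L_kW_k)\,M\sum_k\log\rho_k\bigr)$; (ii) $\sum_k L_kW_k\log(2ep_kU_k/W_k)\le O\bigl((\sum_k L_kW_k)\log(\sum_k p_kU_k)\bigr)$; (iii) $\sum_k L_k^2W_k\log\alpha_k$ from $\log\bar\Gamma$; and (iv) $W\log\bigl(e\sum_k\rho_k^2 L_k\alpha_k^{L_k}/W\bigr)$, which I would split using $\sum_k\rho_k^2 L_k\alpha_k^{L_k} \le (\sum_k\rho_k)^2(\sum_k L_k)(\sum_k\alpha_k^{L_k})$ so that the $\rho_k$ and $L_k$ pieces get absorbed into (i) and (ii) while the genuinely new piece $W\log(\sum_k\alpha_k^{L_k})$ appears in the bound. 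The $\gamma'\log(2\gamma'+1) = O((\sum_k L_kW_k)\log(\sum_k L_kW_k))$ term is likewise dominated by contribution (ii). Summing yields the stated bound. The ReLU specialization is immediate: setting $\alpha_k=1,\ p_k=2$ kills the terms $\sum_k L_k^2W_k\log\alpha_k$ and $W\log(\sum_k\alpha_k^{L_k})$ and replaces $\sum_k p_kU_k$ by $\Theta(\sum_k U_k)$.

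The only part requiring care, rather than being purely mechanical, is step (iv): one must verify that the factor $W\log(e\sum_k\rho_k^2\beta_k/W)$ coming from Theorem \ref{thm:PWPactions_Discrete} decomposes cleanly into pieces that are already present in the other summands, so that it does not introduce spurious cross-terms between $\rho_k$ and $\alpha_k^{L_k}$ in the final expression. This is the one place where a careless inequality would weaken the bound, so I would work out that decomposition first and let it dictate how the other contributions are grouped.
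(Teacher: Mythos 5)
Your proposal is correct and follows essentially the same route as the paper: instantiate \cref{lem:MLP-PWP-structure} to get $(\Gamma_k,\gamma_k,\beta_k)=(2^{L_k}\alpha_k^{L_k^2W_k}(2ep_kU_k/W_k)^{L_kW_k},\,L_kW_k,\,L_k\alpha_k^{L_k})$, feed these into \cref{thm:PWPactions_Discrete}, apply \cref{lem:PWP_PseudoDimension} with degree $0$, and expand $\log\Gamma'$ — including the same decomposition of $W\log(e\sum_k\rho_k^2\beta_k/W)$ into a $\rho$-part, an $L$-part, and the new $W\log(\sum_k\alpha_k^{L_k})$ term. The only cosmetic difference is that the paper discharges the $\gamma'\log(2\gamma'+1)=\Theta(\Lambda\log\Lambda)$ term by cancelling it exactly against the $-\Lambda\log\Lambda$ produced when $\bar\Gamma$ is bounded via the weighted AM-GM inequality (second part of \cref{lem:aux_inequalities}), whereas you absorb it into $\Lambda\log(\sum_k p_kU_k)$, which implicitly assumes $W_k$ is polynomial in $U_k$; the paper's cancellation is the cleaner way to make that lower-order term disappear unconditionally.
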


\subsection{Data-dependent sample complexity bounds}\label{sec:general-framework-data-dependent}

Alternatively, a uniform convergence result similar to \eqref{eq:uniform-convergence-pdim} can be obtained using the \emph{empirical Rademacher complexity}. Let $S_N=\{I_1, \ldots, I_N\}$ be the sample drawn i.i.d. from $\D$. The empirical Rademacher complexity is $\widehat{\mathcal{R}}_{S_N}(\V) = \E_{\bm{\sigma} \sim \{-1,1\}^N}[\sup_{\w \in \W} \frac{1}{N} \sum_{i=1}^N \bm{\sigma}_i V(I_i, \w)]$. Standard results (e.g., see Theorem 26.5 in \citep{shalev2014understanding} and Theorem 3.3 in \citep{Mohri2018foundations}) provide the following uniform convergence guarantee: with probability at least $1-\delta$,
\begin{equation}\label{eq:uniform-convergence-rad-empirical}
    \sup_{\w \in \W} \left|\frac{1}{N} \sum_{i=1}^N V(I_i, \w)-\E_{I \sim \D}[V(I, \w)]\right| = \O \left( \widehat{\mathcal{R}}_{S_N}(\V)+ H \sqrt{\frac{\log (1/\delta)}{N}}\right).
\end{equation}
Since $\widehat{\mathcal{R}}_{S_N}(\V)$ is computed directly on the sample $S_N$, this bound is inherently \emph{data-dependent} and can sometimes provide tighter estimates than worst-case guarantees like \eqref{eq:uniform-convergence-pdim}, which rely solely on the distribution-independent pseudo-dimension $\Pdim(\V)$. Importantly, bounding $\widehat{\mathcal{R}}_{S_N}(\V)$ relies fundamentally on the structural properties of the dual class $\V^*$, just like the case of pseudo-dimension.

Let $Q_{M, k}(I)$ denote the total number of distinct state-action pairs $(s, a)$ of type $k \in [d]$ encountered when \cref{alg:general_decision_process} is executed on an initial state derived from instance $I$ within its first $M$ rounds. The proof of \cref{thm:PWPactions_Discrete} uses a worst-case estimate for $Q_{M,k}(I)$, namely $\rho_k\bar{\rho}^M$ (\cref{lemma:distinct_state_action_pairs}). However, the presence of terminal states in \cref{alg:general_decision_process} often results in the actual number of encountered state-action pairs, $Q_{M,k}(I)$, being substantially smaller than this worst-case bound. The empirical Rademacher complexity bound presented below directly incorporates the sum of these $Q_{M,k}(I_i)$ values from the sample $\{I_1, \ldots, I_N\}$. A more detailed discussion and comparison of the bounds derived from pseudo-dimension and empirical Rademacher complexity can be found in \cref{sec:UniformConvergenceComparison}.

\begin{proposition}\label{lem:PWP_Rademacher}
    Under the same hypothesis as \cref{thm:PWPactions_Discrete}, for any $S_N=\{I_1,\dots,I_N\}$ with $N \geq \widetilde{\gamma}+W$, we have
    {\small\begin{equation*}
        \widehat{\mathcal{R}}_{S_N}(\V) \leq H \sqrt{\frac{2}{N} \left( d + \sum_{k=1}^{d} \log \Gamma_k + (\widetilde{\gamma}+W) \log \left(\sum_{k=1}^d\sum_{i=1}^N{Q}_{M,k}(I_i)\right) + W \log \left( \frac{e \sum_{k=1}^{d} \rho_k \beta_k}{W} \right)\right)}.
    \end{equation*}}
\end{proposition}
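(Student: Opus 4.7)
Since $V\in[0,H]$, Massart's finite-class lemma gives $\widehat{\mathcal{R}}_{S_N}(\V)\leq H\sqrt{2\log K/N}$, where $K$ is the number of distinct cost vectors $(V(I_1,\w),\dots,V(I_N,\w))$ as $\w$ ranges over $\W$. So it suffices to prove that
\begin{equation*}
K\;\leq\; 2^{d}\,\bar{\Gamma}\,\Bigl(\textstyle\sum_{k,i}Q_{M,k}(I_i)\Bigr)^{\widetilde{\gamma}+W}\bigl(e\textstyle\sum_{k}\rho_k\beta_k/W\bigr)^{W}.
\end{equation*}
I would obtain this by repeating the two-step partition of $\W$ used in the proof of \cref{thm:PWPactions_Discrete}, but replacing the worst-case per-instance count $\rho_k\bar{\rho}^{M}$ of type-$k$ state-action pairs by the sample-aggregated count $N_k^{\mathrm{tot}}:=\sum_{i=1}^{N}Q_{M,k}(I_i)$.

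\textbf{The two-step partition.} First, for each $k\in[d]$, I would collect the family $\{f_k(s,a,\cdot)\}$ indexed by every state-action pair of type $k$ that can arise in the trajectory tree of \cref{alg:general_decision_process} on any $I_i\in S_N$; this family has at most $N_k^{\mathrm{tot}}$ members. The $(\Gamma_k,\gamma_k,\beta_k)$-structure of $\F_k^*$ partitions $\W_k$ into at most $(N_k^{\mathrm{tot}})^{\gamma_k}\Gamma_k$ regions on each of which every $f_k(s,a,\cdot)$ agrees with a polynomial of degree at most $\beta_k$. The product partition across $k$ gives at most $\bar{\Gamma}\bigl(\sum_{k}N_k^{\mathrm{tot}}\bigr)^{\widetilde{\gamma}}$ regions of $\W$, using $\prod_{k}x_k^{\gamma_k}\leq(\sum_{k}x_k)^{\widetilde{\gamma}}$ for nonnegative $x_k$. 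Within any such region, the trajectory on each $I_i$ is determined by the argmax outcomes at the encountered states, which depend only on the signs of at most $\tfrac12\sum_{k}N_k^{\mathrm{tot}}\rho_k$ pairwise-difference polynomials on $\R^{W}$ of mixed degrees bounded by the corresponding $\beta_k$. A mixed-degree Warren-type sign-pattern theorem bounds the number of such patterns by at most $\bigl(e\sum_{k}N_k^{\mathrm{tot}}\rho_k\beta_k/W\bigr)^{W}$, and the extra constant $2^{d}$ appearing in \cref{thm:PWPactions_Discrete} is inherited unchanged from the tie-breaking/early-termination bookkeeping there.

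\textbf{Separating the data-dependent part; main obstacle.} Combining the two partition factors and applying the elementary inequality $\sum_{k}a_kb_k\leq(\sum_{k}a_k)(\sum_{k}b_k)$ for nonnegative $a_k,b_k$ with $a_k=N_k^{\mathrm{tot}}$ and $b_k=\rho_k\beta_k$ splits the Warren factor as $\bigl(\sum_{k}N_k^{\mathrm{tot}}\bigr)^{W}\cdot\bigl(e\sum_{k}\rho_k\beta_k/W\bigr)^{W}$, so that the sample-dependent counts and the instance-free parameters end up in separate logarithms, exactly matching the terms $(\widetilde{\gamma}+W)\log(\sum_{k,i}Q_{M,k}(I_i))$ and $W\log(e\sum_{k}\rho_k\beta_k/W)$ of the claim. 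The main obstacle is this sign-pattern accounting step: a naive global Warren bound either invokes $\max_{k}\beta_k$ everywhere (losing the per-type $\beta_k$ dependence) or conflates $N_k^{\mathrm{tot}}$ with $\rho_k\beta_k$ inside a single logarithm, preventing the factorization required by the statement. One must therefore use a mixed-degree variant of Warren's theorem (or do the sign-pattern count per action type and recombine via the inequality above) to obtain the clean split. With that step in hand, taking logarithms and plugging into Massart's inequality reproduces the stated bound.
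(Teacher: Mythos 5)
Your proposal is correct and follows essentially the same strategy as the paper: Massart's lemma reduces everything to counting the distinct cost vectors over $S_N$, and that count is obtained by redoing the partition argument of \cref{thm:PWPactions_Discrete} with the sample-aggregated counts $\widetilde{Q}_{M,k}=\sum_i Q_{M,k}(I_i)$ in place of $N\rho_k\bar\rho^M$ (this is exactly \cref{lem:PWP_InstanceDependent}, whose bound \eqref{eq:bound_r(S_N)_tighter} the paper plugs into Massart). The one place you deviate is the sign-pattern step: you make a ``mixed-degree Warren-type theorem'' over all of $\R^{W}$ your primary route, which is not available in the paper's toolkit (\cref{lem:poly-decomp} assumes a uniform degree bound). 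The paper instead exploits that the type-$k$ difference polynomials depend only on the block $\w^k\in\W_k$, applies \cref{lem:poly-decomp} separately on each $W_k$-dimensional block where the degree is uniformly $\beta_k$, and then merges the resulting product $\prod_k\bigl(e\widetilde{Q}_{M,k}\rho_k\beta_k/W_k\bigr)^{W_k}$ into the single factor $\bigl(e\sum_k\widetilde{Q}_{M,k}\rho_k\beta_k/W\bigr)^{W}$ via the weighted AM--GM inequality (\cref{lem:aux_inequalities}(2)) --- which is precisely the parenthetical alternative you offer, so your fallback is the actual proof. Two small corrections: the factor $2^{d}$ arises from the leading constant $2$ in \cref{lem:poly-decomp} applied once per action type, not from tie-breaking or termination bookkeeping; and your final splitting step $\sum_k a_kb_k\le(\sum_k a_k)(\sum_k b_k)$ is fine but unnecessary under the per-block route, where the data-dependent and parameter terms separate additively inside the logarithm.
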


\section{Application to Branch-and-Cut}\label{sec:B&C-results}

Our general sequential decision process (\cref{alg:general_decision_process}) can model the B\&C algorithm (\cref{alg:standard_bnc_intro}), as discussed in \cref{sec:introduction} and \cref{sec:prior-work}. Starting from an initial state $s_0$ derived from an MIP instance $I = (A, \b, \c)$, B\&C iteratively makes decisions for node selection ($k=1$), cut selection ($k=2$), and branching ($k=3$), corresponding to the $d=3$ action types in our framework. The parameters $\w = (\w^1, \w^2, \w^3)$ governing these scoring functions can be trained via methods like imitation learning~\citep{He2014node,Paulus2022,Yilmaz2021,Alvarez2017} or reinforcement learning~\citep{tang2020reinforcement} to tune the respective policies (see \cref{sec:related-empirical-work}). Furthermore, the accumulated penalty $V$ in \cref{alg:general_decision_process} can directly measure B\&C performance. For instance, setting immediate penalties $P_1(s,a,i)=0$, $P_2(s,a,i)=1$, and $P_3(s,a,i)=2$ for all $(s,a,i) \in \S \times \A \times \N$ makes the cost $V: \I \times \W \rightarrow [0,3M]$ equal to the size of the B\&C tree. Consequently, minimizing $V$ is minimizing the total number of explored nodes, which is a common measure of B\&C algorithmic efficiency, although the total runtime is also affected by other factors such as node processing time \citep{linderoth1999computational}.

Modern solvers often prioritize aggressive cut generation at the root node of the B\&C tree \citep{contardo2023cutting}. This strategy leverages the global validity of root cuts and their potential for significant early impact \citep{contardo2023cutting}. Theoretical work also suggests that restricting cuts to the root can be optimal under certain conditions \citep{kazachkov2024abstract}. Motivated by these considerations, we analyze learning a cut selection policy ($k=2$) via a ReLU neural network parameterized by $\w^2$. We assume cuts are added only at the root node for $R$ rounds before branching begins, where $R$ is typically small. In each round $i \in [R]$, the policy uses scores $f_2(s, a, \w^2)$ from the ReLU network to select at most $\kappa$ cuts from a pool of candidates. We assume a uniform upper bound $r$ on the number of available candidate cuts (typically, $r = \O(m+\kappa R)$, where $m$ is the number of constraints). We use fixed, deterministic rules for node and branching variable selection (e.g., depth-first search and a product scoring rule, respectively). \cref{prop:MLPPseudoDimension} then implies the following.
\begin{proposition}
Let $T^{\text{ReLU}}(I, \w^2)$ denote the resulting B\&C tree size for instance $I$ under the setup as discussed above. Then,
    \begin{equation*}
        \Pdim\left( \{T^{\text{ReLU}}(\cdot, \w^2) : \I \to \R \mid \w^2 \in \W_2\} \right) = \O \left( L_2W_2\left( \kappa R \log r + \log U_2 \right) \right).
    \end{equation*}
\end{proposition}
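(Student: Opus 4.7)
The plan is to specialize the general framework (\cref{alg:general_decision_process}) so that only cut selection carries $\w^2$-dependence, then invoke the ReLU case of \cref{prop:MLPPseudoDimension}. Two observations drive the reduction. First, because node selection and branching use fixed deterministic rules, they introduce no $\w^2$-dependent variation in $T^{\text{ReLU}}(I, \w^2)$; only the $R$ rounds of root-node cut selection contribute. Second, top-$\kappa$ selection by MLP scores is equivalent to $\kappa$ sequential argmax operations on the shrinking pool of remaining candidates, since greedily taking the highest-scoring remaining cut $\kappa$ times yields exactly the top-$\kappa$ subset. Together, these let me model the entire $\w^2$-dependent portion of the algorithm as a single action type ($d=1$) run for $M = \kappa R$ rounds with $|\A_1^s| \le r$.

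Concretely, I instantiate \cref{alg:general_decision_process} with $d=1$; the state encodes the current LP relaxation, the sub-round counter, and already-selected cuts; $\A_1^s$ is the pool of remaining candidate cuts (so $\rho_1 \le r$); and $f_1(s,a,\w^2) = \MLP(\phi_2(s,a),\w^2)$ with $\w^2 \in \W_2$. The deterministic branching phase after round $\kappa R$ is absorbed either as a terminal penalty attached at the end of the $\kappa R$-th parameterized round or as additional rounds with a single available action ($\rho = 1$, contributing $\log \rho = 0$ to the bound), so that $V(I,\w^2) = T^{\text{ReLU}}(I,\w^2)$ while only the $\kappa R$ parameterized rounds influence the complexity estimate.

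Plugging $d=1$, $M = \kappa R$, $\rho_1 = r$, $L_1 = L_2$, $W_1 = W_2$, and $U_1 = U_2$ into the ReLU bound of \cref{prop:MLPPseudoDimension} then directly yields
\[
\Pdim\bigl(\{T^{\text{ReLU}}(\cdot,\w^2) : \w^2 \in \W_2\}\bigr) = \O\bigl(L_2 W_2 (\kappa R \log r + \log U_2)\bigr),
\]
which is the claim. The main obstacle is the reduction itself: verifying that the parameter-free branching phase can be excluded from the effective $M$ in \cref{thm:PWPactions_Discrete}. The $\rho = 1$ (or terminal-penalty) device above handles this, ensuring that only the genuinely $\w^2$-dependent cut-selection rounds contribute to the log-count of distinct state-action trajectories that underlies the pseudo-dimension bound of \cref{prop:MLPPseudoDimension}.
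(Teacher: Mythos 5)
Your proposal is correct and follows essentially the same route as the paper, which simply instantiates \cref{prop:MLPPseudoDimension} with $d=1$, $M=\kappa R$, and $\rho_1 \le r$ after unrolling the per-round top-$\kappa$ selection into $\kappa$ sequential argmax steps. Your explicit handling of the deterministic node-selection/branching phase (folding it into a terminal penalty so it does not inflate the effective $M$ or $\bar\rho$) is a detail the paper leaves implicit, and it is the right way to make the reduction rigorous.
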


Our analysis also extends to simultaneously learning  policies for all three core B\&C decisions ($k=1,2,3$), each governed by a separate ReLU network with parameters $\w^k \in \W_k$. We assume the standard bounds on the action space sizes: $\rho_1 \le M$ available nodes for selection, $\rho_2 = \O(m+M)$ candidate cuts, and $\rho_3 = n$ candidate branching variables. Plugging into \cref{prop:MLPPseudoDimension}, we obtain:
\begin{proposition}
Let $V(I, \w)$ denote the tree size when using B\&C policies based on ReLU networks with parameters $\w = (\w^1, \w^2, \w^3)$. For the corresponding class $\V^{\text{ReLU}}$, we have:

    \begin{equation*}
        \Pdim\left( \V^{\text{ReLU}} \right) = \O \left( \left( L_1W_1+L_2W_2+L_3W_3 \right) \left(M \left( \log (m+M) + \log n  \right)+\log\left( U_1+U_2+U_3 \right)\right)\right). 
    \end{equation*}
\end{proposition}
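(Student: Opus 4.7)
The plan is to directly invoke \cref{prop:MLPPseudoDimension} with $d = 3$ action types corresponding to node selection ($k=1$), cut selection ($k=2$), and branching ($k=3$), each governed by an independent ReLU MLP with characteristics $(L_k, W_k, U_k)$ and activation parameters $\alpha_k = 1$, $p_k = 2$. The identification of B\&C as an instance of \cref{alg:general_decision_process}, together with the penalty choices $P_1 \equiv 0$, $P_2 \equiv 1$, $P_3 \equiv 2$ that make the accumulated cost $V(I, \w)$ equal to the B\&C tree size, has already been spelled out at the start of \cref{sec:B&C-results}, so no further setup of the sequential decision model is required.

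The first concrete step is to justify uniform bounds on the action-set sizes $|\A_k^s| \le \rho_k$ demanded by the hypotheses of \cref{prop:MLPPseudoDimension}. Because the algorithm performs at most $M$ rounds and each round adds at most two open nodes, the open-node list contains at most $\O(M)$ entries, giving $\rho_1 = \O(M)$. Cut candidates at any node are generated from rows of the current LP tableau, whose count is bounded by $m$ plus the total number of cuts ever added, giving $\rho_2 = \O(m + M)$. Branching picks among the $n = n_1 + n_2$ decision variables, giving $\rho_3 = n$. These are exactly the values stated in the paragraph preceding the proposition.

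The second step is purely arithmetic simplification. We compute
\begin{equation*}
\sum_{k=1}^{3} \log \rho_k = \log M + \log(m+M) + \log n + \O(1) = \O\left(\log(m+M) + \log n\right),
\end{equation*}
using $\log M \le \log(m+M)$, and we note that $\sum_{k=1}^{3} U_k = U_1 + U_2 + U_3$. Substituting these into the ReLU specialization of \cref{prop:MLPPseudoDimension},
\begin{equation*}
\Pdim\left(\V^{\text{ReLU}}\right) = \O\left( \left(\sum_{k=1}^{3} L_k W_k\right) \left( M \sum_{k=1}^{3} \log \rho_k + \log\left( \sum_{k=1}^{3} U_k \right) \right) \right),
\end{equation*}
immediately yields the claimed bound.

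Since the heavy machinery has already been discharged by \cref{thm:PWPactions_Discrete} and \cref{prop:MLPPseudoDimension}, the proof is essentially bookkeeping and carries no serious technical obstacle. The only point that merits care is the justification of $\rho_2 = \O(m+M)$: this relies on the standard assumption that the cut generator produces at most a constant number of cuts per tableau row (as is the case for classical families such as CG or GMI cuts), so that the candidate pool grows only linearly with the number of rounds. Admitting substantially richer cut families would inflate $\rho_2$ and, through the $M \log \rho_2$ term, the final pseudo-dimension bound accordingly.
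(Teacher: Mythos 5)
Your proposal is correct and matches the paper's argument exactly: the paper likewise treats this as a direct substitution of the assumed action-space bounds $\rho_1 \le M$, $\rho_2 = \O(m+M)$, $\rho_3 = n$ into the ReLU case of \cref{prop:MLPPseudoDimension}, followed by the same logarithmic simplification. The only addition you make is an explicit (and reasonable) justification of where those $\rho_k$ bounds come from, which the paper simply assumes.
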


\section{Conclusions and Future Work}\label{sec:conclusions}

This paper establishes a theoretical framework for analyzing the generalization guarantees of learning policies within B\&C and, more generally, within sequential decision-making processes. We demonstrate that if the scoring functions that guide the decisions exhibit a piecewise polynomial structure with respect to their learnable parameters, then the overall performance metric (e.g., tree size for B\&C) is a piecewise constant function of these parameters. This structural insight enables us to derive pseudo-dimension bounds and corresponding sample complexity guarantees for policies parameterized not only by traditional linear models but also, significantly, by neural networks with piecewise polynomial activations such as ReLU. Our results thereby help bridge the gap between theoretical analyses, which have predominantly focused on linear policy classes, and the increasingly prevalent use of nonlinear, data-driven heuristics in contemporary algorithm configuration, offering a unified perspective for understanding their generalization performance.

Future investigations could build upon this work in several promising directions. One key theoretical challenge is to explore the expressive power and limitations of such parameterized policies. In particular, characterize their ability to approximate optimal strategies for these decision making problems, and analyze the associated bias-variance trade-offs inherent in the learning phase. A related issue is the critical role of the expert function or oracle that provides training signals; we need a deeper theoretical understanding of how the choice and quality of this expert influences the performance of the learned policy, ideally culminating in provable guarantees regarding its proximity to true optimality. The scope of our current framework could also be expanded. For instance, adapting the analysis to accommodate structured infinite action spaces, such as those involved in generating cuts using CGFs~\citep{basu2015geometric}, would enhance its practical utility for integer programming. Moreover, enriching the underlying sequential decision model to formally include stochasticity in state transitions would allow our sample complexity results to address a wider set of dynamic and less predictable algorithmic environments.

\begin{ack}
    Both authors gratefully acknowledge support from Air Force Office of Scientific Research (AFOSR) grant FA9550-25-1-0038. The first author also acknowledges support from the Johns Hopkins University Mathematical Institute for Data Science (MINDS) Fellowship.
\end{ack}

\bibliographystyle{plainnat}
\bibliography{references.bib}

\appendix
\newpage
\section{Auxiliary Lemmas}\label{section:auxiliary-lemmas}
\begin{lemma}\label{lem:aux_inequalities}
Let $d$ be a positive integer, and let $a_1,\dots,a_d, b_1,\dots,b_d > 0$. The following inequalities hold:
\begin{align*}
    &\textup{(1)} \quad \log a_1 \leq \frac{a_1}{b_1} + \log\left(\frac{b_1}{e}\right), \\
    &\textup{(2)} \quad \prod_{k=1}^d a_k^{b_k} \leq \left( \frac{\sum_{k=1}^{d} a_k b_k}{\sum_{k=1}^{d} b_k} \right)^{\sum_{k=1}^{d} b_k}, \\
    &\textup{(3)} \quad \left( \frac{ea_1}{b_1} \right)^{b_1} < \left( \frac{ea_1}{b_2} \right)^{b_2}, \quad \text{if } a_1 \ge b_2 > b_1. 
\end{align*}
\end{lemma}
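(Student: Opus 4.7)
The plan is to prove the three inequalities essentially independently, since they are standard convexity/monotonicity facts; each has a short one-line derivation that I will organize cleanly.

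For part (1), I would exploit the concavity of the natural logarithm. The tangent line to $\log$ at the point $b_1$ lies above the graph, so $\log x \le \log b_1 + (x-b_1)/b_1 = x/b_1 + \log(b_1/e)$ for all $x > 0$. Substituting $x = a_1$ yields the claimed bound. This is essentially Young's inequality in disguise, and no further estimation is needed.

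For part (2), I would invoke weighted AM--GM. Let $B = \sum_{k=1}^d b_k$ and define weights $w_k = b_k/B$, which are positive and sum to $1$. By concavity of $\log$ (Jensen), $\sum_k w_k \log a_k \le \log\!\bigl(\sum_k w_k a_k\bigr)$, which is exactly $\prod_k a_k^{w_k} \le \sum_k w_k a_k = \bigl(\sum_k a_k b_k\bigr)/B$. Raising both sides to the power $B$ gives the stated inequality.

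For part (3), I would study the auxiliary function $\varphi(x) = x\log(ea_1/x) = x(1 + \log a_1 - \log x)$ on $(0, a_1]$. Differentiating, $\varphi'(x) = \log(a_1/x)$, which is strictly positive for $0 < x < a_1$ and vanishes at $x = a_1$, so $\varphi$ is strictly increasing on $(0, a_1]$. Under the hypothesis $0 < b_1 < b_2 \le a_1$, both $b_1$ and $b_2$ lie in this interval, so $\varphi(b_1) < \varphi(b_2)$, and exponentiating recovers $(ea_1/b_1)^{b_1} < (ea_1/b_2)^{b_2}$.

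None of the three parts presents a real obstacle; the only mild care needed is in part (3), where I must confirm that the inequality $b_2 \le a_1$ (not just $b_2 < a_1$) still places $b_2$ in the region of strict monotonicity on the way from $b_1$, which holds because $b_1 < b_2$ is strict and $\varphi$ is continuous and strictly increasing on the open interval $(0, a_1)$ with a maximum at $a_1$.
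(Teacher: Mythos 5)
Your proposal is correct and follows essentially the same route as the paper's proof: part (1) is the tangent-line/$\log x \le x-1$ inequality, part (2) is Jensen applied to the concave logarithm with weights $b_k/\sum_j b_j$, and part (3) analyzes the same auxiliary function $x\log(ea_1/x)$ and its derivative $\log(a_1/x)$ on $(0,a_1]$. No gaps.
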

\begin{proof}
(1) This follows from the observation that $\log(a_1/b_1) \leq a_1/b_1 -1$.

(2) Since $f(x) = \log x$ is concave on $(0,+\infty)$, Jensen's inequality gives
\begin{equation*}
    \log\left( \sum_{k=1}^{d} \frac{b_k}{\sum_{j=1}^{d}b_j} a_k \right) \geq \sum_{k=1}^{d} \frac{b_k}{\sum_{j=1}^{d}b_j} \log a_k = \frac{\log \left( \prod_{k=1}^{d}a_k^{b_k} \right)}{\sum_{j=1}^{d}b_j},
\end{equation*}
which implies that
\begin{equation*}
    \log \left( \prod_{k=1}^{d}a_k^{b_k} \right) \leq \left(\sum_{k=1}^{d} b_k\right) \log\left(  \frac{\sum_{k=1}^{d}a_kb_k}{\sum_{k=1}^{d}b_k} \right) = \log \left(\left( \frac{\sum_{k=1}^{d}a_kb_k}{\sum_{k=1}^{d}b_k} \right)^{\sum_{k=1}^{d}b_k}\right).
\end{equation*}
The inequality then holds because $\log x$ is increasing on $(0,+\infty)$. 

(3) It suffices to prove that $g(x) = x \left( \log(ea_1) -\log x \right)$ is increasing on $(0,a_1]$. Note that its derivative is $g'(x) = (\log(ea_1) - \log x) + x(-1/x) = \log(a_1) - \log(x)$, which is positive for $x \in (0,a_1)$. The result follows since $g(x) = \log((ea_1/x)^x)$ and $\log$ is increasing. 
\end{proof}

\begin{lemma}[Theorem 8.3 in~\citep{anthony1999neural}]\label{lem:poly-decomp}
    Let $\xi_1, \ldots, \xi_N : \R^W \to \R$ be $N$ multivariate polynomials of degree at most $\beta$ with $N \geq W$ and $\beta \in \N$. Then 
    \begin{align*}
        |\{\left(\sgn(\xi_1(\w)), \ldots, \sgn(\xi_N(\w))\right): \w \in \R^W\}| &\leq  2\left(\frac{2eN\beta}{W}\right)^W, &\text{ if } \beta \geq 1,\\
        |\{\left(\sgn(\xi_1(\w)), \ldots, \sgn(\xi_N(\w))\right): \w \in \R^W\}| &= 1, &\text{ if } \beta = 0.
    \end{align*}
    where $e$ is the base of the natural logarithm. 
\end{lemma}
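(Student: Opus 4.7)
The statement is the classical bound on sign patterns of real polynomials, originally due to Warren and presented in the form we use here by \citet[Theorem 8.3]{anthony1999neural}. My plan is to reconstruct the standard proof, which splits cleanly on the degree parameter.

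When $\beta = 0$, each $\xi_i$ is constant in $\w$, so the tuple $(\sgn(\xi_1(\w)), \ldots, \sgn(\xi_N(\w)))$ takes exactly one value, matching the stated bound of $1$.

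For $\beta \geq 1$, I would first reduce to counting \emph{strict} sign patterns in $\{-1, +1\}^N$: any pattern containing a zero entry lies on the boundary between strict-sign cells and can be absorbed into the overall leading factor of $2$ appearing in $2(2eN\beta/W)^W$. The strict sign patterns are in bijection with the connected components of $\R^W \setminus \bigcup_{i=1}^N \{\w : \xi_i(\w) = 0\}$, since the sign vector is locally constant off the zero set and each connected component is path-connected. I would then invoke the Warren-Milnor-Thom estimate: the arrangement of $N$ real algebraic hypersurfaces of degree at most $\beta$ in $\R^W$ partitions the complement into at most $(2eN\beta/W)^W$ open cells whenever $N \geq W$. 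On each cell every $\xi_i$ has constant sign, which yields the claim.

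The main technical obstacle is obtaining the sharp constant $(2eN\beta/W)^W$, which is tighter than a naive black-box use of Warren's original theorem. The standard route, matching \citet[Theorem 8.3]{anthony1999neural}, is to perturb the polynomials slightly so that their zero sets intersect transversally, apply Milnor's bound on the sum of Betti numbers of a real algebraic variety to count connected components, and then collect terms via the elementary estimate $\sum_{i=0}^{W} \binom{N\beta}{i} \le (eN\beta/W)^W$ when $N\beta \geq W$. Since this lemma is invoked only as a black box in the rest of the paper (to control sign patterns induced by polynomial decompositions of MLPs and other scoring functions), I would cite \citet{anthony1999neural} for the complete technical proof rather than reproduce it here.
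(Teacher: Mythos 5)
Your proposal is fine: the paper itself gives no proof of this lemma and simply imports it as Theorem 8.3 of \citet{anthony1999neural}, which is exactly what you end up doing, and your sketch of the underlying Warren/Milnor--Thom argument (trivial $\beta=0$ case, sign patterns bounded by connected components of the complement of the zero sets, perturbation plus the binomial-sum estimate for the sharp constant) is an accurate account of that source's proof. No gaps to flag.
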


\begin{definition}[Pseudo-dimension] \label{def:pseudo_dimension_direct}
    Let $\H = \{ h(\cdot, \w): \I \rightarrow \R \mid \w \in \W \}$ be a class of real-valued functions defined on an input space $\I$, parameterized by $\w \in \W$. The \emph{pseudo-dimension} of $\H$, denoted $\Pdim(\H)$, is the largest integer $N$ for which there exist instances $\{I_1, \ldots, I_N\} \subseteq \I$ and real threshold values $y_1, \ldots, y_N \in \R$ such that the function class $\H$ realizes all $2^N$ possible sign patterns on these instances relative to the thresholds:
    \begin{equation*}
        \left| \left\{ \left( \sgn(h(I_1, \w) - y_1), \ldots, \sgn(h(I_N, \w) - y_N) \right) \mid \w \in \W \right\} \right| = 2^N.
    \end{equation*}
    If no such finite largest $N$ exists, then $\Pdim(\H) = +\infty$.
\end{definition}

\begin{proof}[Proof of \cref{lem:PWP_PseudoDimension}]
    Consider any set of $N$ instance-witness pairs $(I_1,y_1),\dots,(I_N,y_N) \in \I \times \R$, where $N \in \N_+$ and $N \geq \gamma$. By the lemma's hypothesis, the parameter space $\W$ can be partitioned into $K \leq N^\gamma \Gamma$ disjoint regions $Q_1,\dots,Q_K$. Within each region $Q_i$ ($i \in [K]$), for every instance $I_j$ ($j \in [N]$), the function $h(I_j, \cdot)$ coincides with a fixed polynomial $\eta_{ji}(\w)$ of degree at most $\beta$ for all $\w \in Q_i$. 
    Let $\Pi(N)$ be the total number of distinct sign patterns over $\W$:
    $$ \Pi(N) = \left| \left\{ \left( \sgn(h(I_1,\w)-y_1),\dots,\sgn(h(I_N,\w)-y_N) \right) \mid \w \in \W \right\} \right|. $$
    We can bound $\Pi(N)$ by summing the number of patterns within each region:
    \begin{align}
        \Pi(N) &\leq \sum_{i=1}^{K} \left| \left\{ \left( \sgn(h(I_1,\w)-y_1),\dots,\sgn(h(I_N,\w)-y_N) \right) \mid \w \in Q_i \right\} \right| \notag \\
        &= \sum_{i=1}^{K} \left| \left\{ \left( \sgn(\eta_{1i}(\w)-y_1),\dots,\sgn(\eta_{Ni}(\w)-y_N) \right) \mid \w \in Q_i \right\} \right|. \label{eq:Eq1Lemma2}
    \end{align}
    We consider two cases based on the degree $\beta$:

    \noindent\textbf{Case 1: $\beta = 0$.} 
    In this case, each $\eta_{ji}$ is a constant. Therefore, the sum in \eqref{eq:Eq1Lemma2} is bounded by $K \leq N^\gamma \Gamma$. By \cref{def:pseudo_dimension_direct}, the pseudo-dimension $\Pdim(\H)$ is the largest integer $N$ such that $2^N \leq \Pi(N)$, and upper bounded by the largest $N$ such that $2^N \leq N^\gamma \Gamma$. Taking logarithms yields:
    \begin{align*}
        N \log(2) &\leq \log(N^\gamma \Gamma) = \gamma \log(N) + \log(\Gamma) \\
        &\leq \gamma \left( \frac{N}{e (\gamma+1)} + \log \frac{e (\gamma+1)}{e} \right) + \log(\Gamma) \\
        &\leq \frac{1}{e} N + \gamma\log (\gamma+1) + \log\Gamma, 
    \end{align*}
    where the second inequality follows from the first inequality in \cref{lem:aux_inequalities}. Rearranging gives $\left( \log(2) - \frac{1}{e} \right) N \leq \gamma\log (\gamma+1) + \log(\Gamma)$. Since $(\log(2) - 1/e)^{-1} < 4$, this implies
    \begin{equation*}
        \Pdim(\H) \leq 4(\gamma \log(\gamma+1) + \log\Gamma).
    \end{equation*}

    \noindent\textbf{Case 2: $\beta \geq 1$.} 
    We consider any $N \geq W$. Within each region $Q_i$, the functions $\eta_{ji}(\w) - y_j$ are polynomials of degree at most $\beta$. Applying \cref{lem:poly-decomp} to bound the number of sign patterns for the $N$ polynomials within each region $Q_i$ gives:
    $$ \left| \left\{ \left( \sgn(\eta_{1i}(\w)-y_1),\dots,\sgn(\eta_{Ni}(\w)-y_N) \right) \mid \w \in Q_i \right\} \right| \leq 2 \left(\frac{2eN\beta}{W}\right)^W. $$
    Substituting this into the sum in \eqref{eq:Eq1Lemma2} and using $K \le N^\gamma \Gamma$, we get:
    \begin{equation*}
        \Pi(N) \leq K \cdot 2 \left(\frac{2eN\beta}{W}\right)^W \leq N^\gamma \Gamma \cdot 2 \left(\frac{2eN\beta}{W}\right)^W.
    \end{equation*}
    Similarly, the pseudo-dimension $\Pdim(\H)$ is bounded by the largest $N$ such that
    \begin{equation*}
        2^N \leq 2 N^\gamma \Gamma \left(\frac{2eN\beta}{W}\right)^W.
    \end{equation*}
    Taking logarithms:
    \begin{align*}
        N \log(2) &\leq \log(2) + \gamma \log(N) + \log(\Gamma) + W \log\left(\frac{2eN\beta}{W}\right)\\
        &\leq \log(2) + \gamma \left( \frac{N}{2e\gamma+1} + \log\frac{2e\gamma+1}{e} \right) + \log \Gamma + W \left( \frac{2eN\beta/W}{4e^2\beta} + \log\frac{4e^2\beta}{e} \right)\\
        &\leq \log(2) + \frac{1}{2e}N + \gamma \log(2\gamma+1) + \frac{1}{2e}N + W \log(4e\beta) + \log \Gamma\\
        &= \frac{1}{e}N + \gamma \log(2\gamma+1) + W \log(4e\beta) + \log(2\Gamma).
    \end{align*}
    Rearranging gives $\left( \log(2) - \frac{1}{e} \right) N \leq \gamma \log(2\gamma+1) + W \log(4e \beta) + \log(2\Gamma)$. This implies
    \begin{equation*}
        \Pdim(\H) \leq 4\left( \gamma \log(2\gamma+1) + W \log(4e \beta) + \log(2\Gamma) \right).
    \end{equation*}
    Combining both cases yields the claimed bound:
    \begin{equation*}
        \Pdim(\H) \leq 4\left( \gamma \log(2\gamma+1) + W \log(4e \beta+1) + \log(2\Gamma) \right). \qedhere
    \end{equation*}
\end{proof}

\section{Proofs of the results from \cref{sec:general-framework-worst-case}}\label{section:proof-main-results}

\begin{lemma}\label{lemma:distinct_state_action_pairs} 
    Assuming $\rho_j \ge 2$ for all $j \in [d]$, then for all $M \ge 1$ and all $k \in [d]$, we have 
    \begin{equation*}
        Q_{M, k}(I) \leq \rho_k \bar{\rho}^{M},
    \end{equation*}
    where $\bar{\rho} = \prod_{j=1}^{d} \rho_j$.
\end{lemma}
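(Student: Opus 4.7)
The plan is to enumerate, for each parameter choice $\w \in \W$, the sequence of states the deterministic algorithm visits, and then count the union of all state-action pairs of type $k$ across all $\w \in \W$ by tracking the branching tree of possible executions. Specifically, I would let $S_{i,k}$ denote the set of states that can arise just before action type $k$ is selected in round $i$, as $\w$ varies. Then $S_{0,1} = \{s_0\}$, and the recursions $|S_{i,k}| \leq \rho_{k-1}\,|S_{i,k-1}|$ for $k \geq 2$ together with $|S_{i+1,1}| \leq \rho_d\,|S_{i,d}|$ follow immediately from the fact that each action of type $k'$ has at most $\rho_{k'}$ choices per state. A straightforward induction then yields $|S_{i,k}| \leq \rho_1 \rho_2 \cdots \rho_{k-1} \cdot \bar{\rho}^{\,i}$, where we use the empty-product convention when $k = 1$.

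Next, since each state in $S_{i,k}$ contributes at most $\rho_k$ candidate actions of type $k$, the number of distinct state-action pairs of type $k$ encountered in round $i$ is at most $\rho_k \cdot |S_{i,k}| \leq (\rho_1 \cdots \rho_k)\,\bar{\rho}^{\,i}$. Summing over $i = 0, 1, \ldots, M-1$ and evaluating the resulting geometric series gives
\begin{equation*}
    Q_{M,k}(I) \;\leq\; (\rho_1 \cdots \rho_k) \sum_{i=0}^{M-1} \bar{\rho}^{\,i} \;=\; (\rho_1 \cdots \rho_k) \cdot \frac{\bar{\rho}^{\,M} - 1}{\bar{\rho} - 1}.
\end{equation*}

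To conclude, I would verify algebraically that the right-hand side is at most $\rho_k \bar{\rho}^{\,M}$. After cross-multiplying and rearranging, this reduces to showing $\rho_1 \cdots \rho_{k-1} \leq \bar{\rho} - 1$. Using the factorization $\bar{\rho} = (\rho_1 \cdots \rho_{k-1})(\rho_k \cdots \rho_d)$ together with the standing hypothesis $\rho_j \geq 2$ (which forces $\rho_k \cdots \rho_d \geq 2$), this further reduces to the trivial estimate $(\rho_1 \cdots \rho_{k-1})(\rho_k \cdots \rho_d - 1) \geq 1$, and the edge case $k = 1$ is handled by the empty-product convention together with $\bar{\rho} \geq 2$.

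The main obstacle is ensuring that the state-counting argument is tight enough to absorb the geometric-sum factor without leaving an extra $\rho_1 \cdots \rho_{k-1}$ on the right-hand side; the slack $\bar{\rho} - 1 \geq \rho_1 \cdots \rho_{k-1}$ provided by the assumption $\rho_j \geq 2$ is exactly what resolves this. No subtler combinatorial issues arise beyond this clean bookkeeping.
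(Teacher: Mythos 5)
Your proof is correct and follows essentially the same route as the paper's: count the states reachable before a type-$k$ decision in each round (at most $\rho_1\cdots\rho_{k-1}\bar\rho^{\,i}$ in round $i$), multiply by $\rho_k$ actions per state, sum the geometric series to get $(\rho_1\cdots\rho_k)\frac{\bar\rho^M-1}{\bar\rho-1}$, and absorb the remaining factor using $\rho_j\ge 2$. The only (immaterial) difference is in the last algebraic step, where you reduce to $\rho_1\cdots\rho_{k-1}\le\bar\rho-1$ while the paper bounds $\frac{\bar\rho^M-1}{\bar\rho-1}\le\rho_k\bar\rho^{M-1}$; both are valid.
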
 

\begin{proof}[Proof of \cref{lemma:distinct_state_action_pairs}]
    Recall that $Q_{M, k}(I)$ is the total number of distinct state-action pairs $(s, a)$ of type $k \in [d]$ encountered when \cref{alg:general_decision_process} is executed starting from an initial state derived from $I$ within its first $M$ rounds. In the first round, it is easy to verify that there are at most $\prod_{j=1}^{k-1} \rho_j$ states that are about to take a type-$k$ action for all $k \in [d]$. Therefore, $Q_{1, k}(I) \leq \rho_k \cdot \prod_{j=1}^{k-1} \rho_j = \prod_{j=1}^{k} \rho_j$. Thus, 
    \begin{align*}
        Q_{M, k}(I) &\leq Q_{M-1, k}(I) + \rho_k \cdot \left(\prod_{j=1}^{k-1} \rho_j\right)\bar{\rho}^{M-1} \\
                 &\leq  \left( \prod_{j=1}^{k} \rho_j \right) \left( \sum_{i=0}^{M-1} \bar{\rho}^i \right) \\
                 &= \left( \prod_{j=1}^{k} \rho_j \right) \cdot \left( \frac{\bar{\rho}^{M} - 1}{\bar{\rho} - 1} \right)\\
                 &\leq\left( \prod_{j=1}^{k} \rho_j \right)\left(\rho_k \bar{\rho}^{M-1}\right),\\
                 &\leq \rho_k \bar{\rho}^{M},
    \end{align*}
    where the second last inequality holds since we assume $\rho_j \ge 2$ for all $j$. 
    This establishes the claim for round $M \in \N_+$ and all $k \in [d]$. 
\end{proof}
\begin{lemma}\label{lem:PWP_InstanceDependent}
    Consider an arbitrary collection of $N$ instances $S_N = \{I_1, \dots, I_N\}$, where $N \geq \sum_{k=1}^{d} (\gamma_k + W_k)$. Let $\widetilde{Q}_{M,k} = \sum_{i=1}^N Q_{M,k}(I_i)$ denote the total number of distinct state-action pairs of type $k \in [d]$ encountered across all instances in $S_N$ within the first $M$ rounds of \cref{alg:general_decision_process}. Recall $\bar{\Gamma} = \prod_{k=1}^d \Gamma_k$. Then, the number of distinct output vectors satisfies:
    \begin{equation*}
        r(S_N):=\left|\left\{ \left( V(I_1,\w), \ldots, V(I_N,\w) \right) \mid \w \in \W \right\}\right| \leq 2^{d} \bar{\Gamma} \left(\prod_{k=1}^d \widetilde{Q}_{M,k}^{\gamma_k}\right) \left( \frac{e \sum_{k=1}^{d}\widetilde{Q}_{M,k} \rho_k \beta_k}{W} \right)^{W}.
    \end{equation*}
\end{lemma}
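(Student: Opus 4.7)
The plan is to partition $\W$ into regions on which the cost vector $(V(I_1,\w),\ldots,V(I_N,\w))$ is constant and then bound the number of such regions by combining the piecewise structure of each $\F_k^*$ with a sign-pattern counting argument. The counting proceeds in two stages: first refine $\W$ using the $(\Gamma_k,\gamma_k,\beta_k)$-structures, then further refine by the signs of the pairwise score comparisons that determine each greedy argmax decision in \cref{alg:general_decision_process}.

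For each type $k\in[d]$, enumerate the distinct state-action pairs $(s,a)$ of type $k$ that can arise when executing \cref{alg:general_decision_process} on any instance of $S_N$ within $M$ rounds for some choice of $\w$; these yield at most $\widetilde{Q}_{M,k}$ functions $f_k(s,a,\cdot)\in\F_k^*$ on $\W_k$. Applying the $(\Gamma_k,\gamma_k,\beta_k)$-structure to this collection partitions $\W_k$ into at most $\widetilde{Q}_{M,k}^{\gamma_k}\Gamma_k$ regions on each of which every such $f_k(s,a,\cdot)$ coincides with a fixed polynomial in $\w^k$ of degree at most $\beta_k$. Taking the product partition over $k\in[d]$ yields at most $\bar\Gamma\prod_{k=1}^d\widetilde{Q}_{M,k}^{\gamma_k}$ regions $\mathcal{R}\subseteq\W$ on which every relevant scoring function becomes a fixed polynomial in $\w$.

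Within a fixed region $\mathcal{R}$, the action selected at any type-$k$ state $s$ is determined by the argmax over $\{f_k(s,a,\w^k):a\in\A_k^s\}$ (with the lexicographic tie-break from \cref{alg:general_decision_process}), which is fully encoded by the signs of the pairwise differences $f_k(s,a,\cdot)-f_k(s,a',\cdot)$ for $a,a'\in\A_k^s$. These are polynomials in $\w^k$ of degree at most $\beta_k$ in $W_k$ variables, and summing over type-$k$ states gives at most $\sum_s\binom{|\A_k^s|}{2}\le\widetilde{Q}_{M,k}\rho_k/2$ of them. \cref{lem:poly-decomp} then bounds the number of joint type-$k$ sign patterns realized in $\mathcal{R}\cap\W_k$ by $2\left(e\widetilde{Q}_{M,k}\rho_k\beta_k/W_k\right)^{W_k}$ (falling back to a trivial $3^n$-style bound if the polynomial count drops below $W_k$). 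Because the type-$k$ differences depend only on the disjoint coordinates $\w^k$, sign patterns across different $k$ realize independently and multiply to at most $2^d\prod_k\left(e\widetilde{Q}_{M,k}\rho_k\beta_k/W_k\right)^{W_k}$ patterns per region; any single such joint sign pattern pins down every argmax decision along every possible trajectory, hence fixes $V(I_i,\w)$ for all $i\in[N]$.

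Multiplying the number of regions by the per-region sign-pattern count and invoking \cref{lem:aux_inequalities}(2) with $a_k=e\widetilde{Q}_{M,k}\rho_k\beta_k/W_k$ and $b_k=W_k$ yields $\prod_k a_k^{b_k}\le\left(\sum_k a_k b_k/W\right)^W=\left(e\sum_k\widetilde{Q}_{M,k}\rho_k\beta_k/W\right)^W$, which assembles into the claimed bound. I expect the main subtlety to be the bookkeeping of the polynomial count: using the per-state pairwise comparison count $\binom{|\A_k^s|}{2}$ (rather than $|\A_k^s|-1$) is exactly what supplies the extra factor of $\rho_k$ inside the final $W$-th power and matches the target expression; a secondary care point is ensuring the set of possible state-action pairs enumerated for the structural partition is indeed independent of $\w$, so that the same polynomials are used in every region.
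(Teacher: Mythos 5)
Your proposal is correct and follows essentially the same route as the paper's proof: a first partition of each $\W_k$ via the $(\Gamma_k,\gamma_k,\beta_k)$-structure applied to the at most $\widetilde{Q}_{M,k}$ relevant scoring functions, a refinement by the signs of the at most $\widetilde{Q}_{M,k}\rho_k/2$ pairwise score differences counted via \cref{lem:poly-decomp}, a product over the $d$ disjoint parameter blocks, and the Jensen-type inequality from \cref{lem:aux_inequalities}(2) to merge the $W_k$-th powers into a single $W$-th power. The subtleties you flag (the $\binom{|\A_k^s|}{2}$ bookkeeping supplying the extra $\rho_k$, and enumerating a $\w$-independent superset of reachable state-action pairs) are exactly the points the paper's argument also relies on.
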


\begin{proof}[Proof of \cref{lem:PWP_InstanceDependent}]
     The set of instances $S_N = \{I_1,\ldots,I_N\}$ define at most $N$ distinct initial states for \cref{alg:general_decision_process}. Our objective is to prove that the parameter space $\W = \prod_{k=1}^d \W_k$ can be partitioned into at most $r(S_N)$ disjoint regions, such that within any given region, the execution sequence of \cref{alg:general_decision_process} (specifically, the sequence of states visited and actions taken) remains identical for all instances $I_1, \dots, I_N$ when parameterized by any $\w$ from that region. Consequently, the final accumulated penalty $V(I_i, \w)$ will be constant for each $i \in [N]$ within such a region.

    Let us fix an action type $k \in [d]$. The total number of distinct type-$k$ state-action pairs encountered across all $N$ instances is bounded by the sum $\sum_{i=1}^{N} Q_{M,k}(I_i)=\widetilde{Q}_{M,k}$. 
    The assumption that the function class $\F_k^* = \{f_k(s, a, \cdot) : \W_k \to \R_+ \mid (s,a) \in \S \times \A_k\}$ has a $(\Gamma_k, \gamma_k, \beta_k)$-structure allows us to apply \cref{def:gamma-beta-structure} to the collection of functions $\{f_k(s, a, \cdot)\}$ corresponding to all distinct state-action pairs encountered across the $N$ instances (whose total number is at most $\widetilde{Q}_{M,k} \geq N \geq \gamma_k$ by assumption). This application partitions the parameter space $\W_k$ into a collection of at most $(\widetilde{Q}_{M,k})^{\gamma_k} \Gamma_k$ disjoint regions. Within each such region resulting from this initial partition, every function $f_k(s, a, \cdot)$ is a fixed polynomial in $\w^k \in \W_k$ of degree at most $\beta_k$.

    Now, consider any one fixed region obtained from this initial partition of $\W_k$. For the action selection $a^* \leftarrow \argmax_{a \in \A_k^s} f_k(s, a, \w^k)$ to yield a consistent result for all $\w^k$ within this fixed region and for all relevant states $s$, the set of maximizers must be invariant. This invariance is ensured if the signs of the polynomial differences $\xi_{s,ij}(\w^k) := f_k(s, a^i, \w^k) - f_k(s, a^j, \w^k)$ are constant for all distinct pairs $a^i, a^j \in \A_k^s$ and all relevant states $s \in \Delta_k$. Here, $\Delta_k$ denotes the set of all states, across the instances, that are encountered during the $M$ rounds of executing \cref{alg:general_decision_process} and at which a type-$k$ action is about to be taken. Since $f_k(s, a^i, \cdot)$ and $f_k(s, a^j, \cdot)$ are fixed polynomials of degree at most $\beta_k$ in the current region, their difference $\xi_{s,ij}(\w^k)$ is also a polynomial of degree at most $\beta_k$. The total number of the relevant states is bounded by $\left|\Delta_k \right| \leq \sum_{i=1}^N (Q_{M,k}(I_i)/\rho_k) = \widetilde{Q}_{M,k}/\rho_k$. Consider the collection of all distinct polynomial differences $\{\xi_{s,ij}(\cdot)\}$ arising from all such states $s$ and all distinct pairs $a^i, a^j \in \A_k^s$. For each state $s$, there are $\binom{\rho_k}{2} \leq \rho_k^2 / 2$ distinct pairs $\{a^i, a^j\}$. Thus, the total number of polynomial differences in the collection is bounded by
    \begin{equation*}
       \left| \Delta_k \right| \cdot \frac{\rho_k^2}{2} \leq \frac{\widetilde{Q}_{M,k} \rho_k}{2}.
    \end{equation*}

    The signs of these polynomial differences induce a refinement of the current fixed region into a finite number of subregions. According to \cref{lem:poly-decomp}, and notice that $N\bar{\rho}^M\rho_k^2/2 \geq N \geq W_k$, the number of such subregions is at most
    \begin{equation*}
        2 \left( \frac{2e (\widetilde{Q}_{M,k} \rho_k/2) \beta_k}{W_k} \right)^{W_k}.
    \end{equation*}
    Within any single subregion generated by this refinement, the sign of every relevant polynomial difference $\xi_{s,ij}(\w^k)$ is constant for all $\w^k$ in that subregion. This constancy of signs implies that for any relevant state $s$, the outcome of the comparison between $f_k(s, a^i, \w^k)$ and $f_k(s, a^j, \w^k)$ (i.e., greater than, less than, or equal to) is fixed for all $\w^k$ in the subregion. Consequently, the set of actions achieving the maximum score, $\argmax_{a \in \mathcal{A}_k^s} f_k(s, a, \w^k)$, is invariant throughout the subregion.

    By applying this sign-based refinement to every region from the initial partition of $\W_k$, we obtain a final partition of $\W_k$. The total number of regions $r_k(S_N)$ in this final partition is bounded by the product of the number of initial regions and the maximum number of subregions per initial region:
    \begin{equation*}
        r_k(S_N) \leq \widetilde{Q}_{M,k}^{\gamma_k} \Gamma_k \cdot 2 \left( \frac{2e (\widetilde{Q}_{M,k} \rho_k/2) \beta_k}{W_k} \right)^{W_k} = \widetilde{Q}_{M,k}^{\gamma_k} \Gamma_k \cdot 2 \left( \frac{e \widetilde{Q}_{M,k} \rho_k \beta_k}{W_k} \right)^{W_k}.
    \end{equation*}
    The overall parameter space $\W = \prod_{k=1}^d \W_k$ is then partitioned by the Cartesian product of these final partitions for each $\W_k$. The total number of resulting regions $r(S_N)$ in $\W$ satisfies
    \begin{equation}\label{eq:bound_r(S_N)_tighter}
        r(S_N) \leq \prod_{k=1}^d r_k(S_N) \leq 2^d \prod_{k=1}^d \Gamma_k \cdot \prod_{k=1}^d\widetilde{Q}_{M,k}^{\gamma_k} \cdot \prod_{k=1}^d \left( \frac{e \widetilde{Q}_{M,k} \rho_k \beta_k}{W_k} \right)^{W_k},
    \end{equation}
    where $\widetilde{\gamma} = \sum_{k=1}^{d}\rho_k$. Applying the second inequality from \cref{lem:aux_inequalities}, we bound the product term:
    \begin{align*}
        \prod_{k=1}^d \left( \frac{e \widetilde{Q}_{M,k} \rho_k \beta_k}{W_k} \right)^{W_k}
         &\leq \left( \frac{\sum_{k=1}^{d}W_k \cdot (e \widetilde{Q}_{M,k} \rho_k \beta_k/W_k)}{\sum_{k=1}^{d}W_k} \right)^{\sum_{k=1}^{d}W_k} \\
        &= \left( \frac{e \sum_{k=1}^{d} \widetilde{Q}_{M,k} \rho_k \beta_k}{W} \right)^{W},
    \end{align*}
    where $W = \sum_{k=1}^{d}W_k$. Substituting this back gives the final bound on the total number of regions in $\W$:
    \begin{equation}\label{eq:bound_r(S_N)}
        r(S_N) \leq \prod_{k=1}^d \widetilde{Q}_{M,k}^{\gamma_k} \cdot 2^{d} \prod_{k=1}^d \Gamma_k \cdot \left( \frac{e \sum_{k=1}^{d}\widetilde{Q}_{M,k} \rho_k \beta_k}{W} \right)^{W}.
    \end{equation}
    Within each of these $r(S_N)$ disjoint regions of $\W$, the set $\argmax_{a \in \mathcal{A}_k^s} f_k(s, a, \w^k)$ is invariant for all relevant decision steps. Assuming a consistent tie-breaking rule (e.g., lexicographical based on action representation), the specific action $a^*$ chosen at each step is fixed throughout the region. Since this holds for all decision steps encountered during the execution of \cref{alg:general_decision_process} for any instance $I_i$ ($i \in [N]$), the entire sequence of states visited and actions taken is invariant for all $\w$ within the region. As the accumulated penalty $V(I_i, \w)$ is solely determined by this fixed sequence, $V(I_i, \w)$ must be constant as a function of $\w$ in the region. This invariance holds for all $i \in [N]$.
    This completes the proof.
\end{proof}

\begin{proof}[Proof of \cref{thm:PWPactions_Discrete}]
    In the worst case, by \cref{lemma:distinct_state_action_pairs}, we have the following holds for all $k \in [d]$,
    \begin{equation*}
        \widetilde{Q}_{M,k} = \sum_{i=1}^{N}Q_{M,k}(I_i) \leq N\rho_k \bar{\rho}^{M},
    \end{equation*}
    where $\bar{\rho} = \prod_{k=1}^d \rho_k$. 
    Plug this into the inequality \eqref{eq:bound_r(S_N)} from \cref{lem:PWP_InstanceDependent}, we obtain that for $N \geq \sum_{k=1}^{d} (\gamma_k + W_k)$:
    \begin{align*}
        &\left|\left\{ \left( V(I_1,\w), \ldots, V(I_N,\w) \right) \mid \w \in \W \right\}\right|\\
        \leq &\prod_{k=1}^d \left( N\rho_k \bar{\rho}^{M} \right)^{\gamma_k} \cdot 2^{d} \prod_{k=1}^d \Gamma_k \cdot \left( \frac{e N \bar{\rho}^M \sum_{k=1}^{d}\rho_k^2\beta_k}{W} \right)^{W}\\
        = &N^{\widetilde{\gamma}} \bar{\rho}^{(M+1) \widetilde{\gamma}} \cdot 2^d \prod_{k=1}^d\Gamma_k \cdot N^W \bar{\rho}^{MW} \cdot \left( \frac{e \sum_{k=1}^{d}\rho_k^2 \beta_k}{W} \right)^W\\
        \leq &N^{\widetilde{\gamma}+W} \bar{\rho}^{(M+1)(\widetilde{\gamma}+W)} \cdot 2^d \bar{\Gamma} \cdot \left( \frac{e \sum_{k=1}^{d}\rho_k^2 \beta_k}{W} \right)^{W},
    \end{align*}
    where $\widetilde{\gamma} = \sum_{k=1}^{d} \gamma_k$, $\bar{\Gamma} = \prod_{k=1}^{d} \Gamma_k$, and $W = \sum_{k=1}^{d} W_k$.

    By \cref{def:gamma-beta-structure}, this establishes that the function class $\V^* = \{V(I, \cdot) : \W \to \R \mid I \in \I\}$ has a $(\Gamma', \gamma', 0)$-structure, where $\gamma' = \widetilde{\gamma}+W = \sum_{k=1}^{d} (\gamma_k+W_k)$, which is no larger than $N$ as assumed in \cref{lem:PWP_InstanceDependent}, and
    \begin{equation*}
        \Gamma' = 2^{d} \bar{\rho}^{(\widetilde{\gamma}+W)(M+1)} \bar{\Gamma} \left( \frac{e \sum_{k=1}^{d} \rho_k^2 \beta_k}{W} \right)^{W}.
    \end{equation*}
\end{proof}

\begin{proof}[Proof of \cref{prop:LinearPseudoDimension}]
    Direct verification confirms that the linear scoring function class $(\F_k^L)^*$ has a $(1,0,1)$-structure. In this case, the parameters from \cref{thm:PWPactions_Discrete} specialize to $\widetilde{\gamma} = \sum_{k=1}^d \gamma_k = 0$, $\bar{\Gamma} = \prod_{k=1}^d \Gamma_k = 1$, and $\beta_k=1$ for all $k$, which implies $\sum_{k=1}^d \rho_k^2 \beta_k = \sum_{k=1}^d \rho_k^2$. 
    Thus, the associated class $\left( \V^L \right)^*$ possesses a $\left( \Gamma', \gamma', 0\right)$-structure with $\gamma' = W$ and 
    $$ \Gamma' = 2^d \bar{\rho}^{W(M+1)} \left( \frac{e \sum_{k=1}^d \rho_k^2}{W} \right)^W. $$ 
    Applying the bound from \cref{lem:PWP_PseudoDimension} yields:
    \begin{align*}
        &\Pdim\left( \V^L \right)\\ \leq& 4 \left( W \log (2W+1) + \log(2\Gamma') \right) \\
        =& 4 \left( W \log (2W+1) + (d+1) \log 2 + W(M+1) \sum_{k=1}^{d}\log \rho_k + W\log\left( e \sum_{k=1}^{d} \rho_k^2 \right) - W \log W\right)\\
        =& 4\left( W \log \left( \frac{2W+1}{W} \right) +(d+1)\log 2 + W(M+1) \sum_{k=1}^{d}\log \rho_k + W \log\left(e \sum_{k=1}^{d} \rho_k^2\right)\right)\\
        \leq& 4 \left( W \log(3e) + 2W \log \left( \prod_{k=1}^{d} \rho_k \right) + (d+1) \log 2 + W(M+1) \sum_{k=1}^{d} \log \rho_k \right) \\
        =& \O \left( WM \sum_{k=1}^{d} \log \rho_k \right). 
    \end{align*}
\end{proof}

\begin{proof}[Proof of \cref{lem:MLP-PWP-structure}]
    Let $W_i$ locally denote the number of parameters in the first $i$ layers of the MLP, and let $U_i$ denote the number of neurons in the $i$-th layer, for $i \in [L]$. Following the proof technique of Theorem 7 in \citep{bartlett2019nearly}, for any $N$ inputs $\x_1,\dots,\x_N$ with $N \geq LW$, the parameter space $\W$ can be partitioned into at most
    \begin{align*}
        &2^L\left(\frac{2 e N p \sum_{i=1}^L U_i\left(1+(i-1) \alpha^{i-1}\right)}{\sum_{i=1}^L W_i}\right)^{\sum_{i=1}^L W_i}\\
        \leq &2^L\left(\frac{2 e N p \sum_{i=1}^L U_i L \alpha^L}{\sum_{i=1}^L W_i}\right)^{\sum_{i=1}^L W_i}\\
        = &2^L\left(\frac{2 e N p U L \alpha^L }{\sum_{i=1}^L W_i}\right)^{\sum_{i=1}^L W_i}\\
        \leq &2^L\left(\frac{2 e N p U L \alpha^L }{L W}\right)^{LW}\\
        = &N^{LW} \cdot 2^L \alpha^{L^2W}\left(\frac{2 e p U }{W}\right)^{LW}
    \end{align*}
    disjoint regions, where the last inequality is due to the third inequality in \cref{lem:aux_inequalities} and the fact that $N \geq LW$. Within each region, the function $\MLP(\x_j,\cdot) : \W \rightarrow \R$ is a fixed polynomial in $\w$ of degree at most $L\alpha^L$ for every $j \in [N]$. 

    This decomposition shows that the function class $\{\MLP(\x, \cdot) : \W \to \R \mid \x \in \R^d\}$ has a $(\Gamma, \gamma, \beta)$-structure, where $\Gamma = 2^L \alpha^{L^2W}\left(\frac{2 e p U }{W}\right)^{LW}$, $\gamma = LW$, and $\beta = L\alpha^L$. 
\end{proof}

\begin{proof}[Proof of \cref{prop:MLPPseudoDimension}]
    According to \cref{lem:MLP-PWP-structure}, we have that 
    \begin{equation*}
        \Gamma_k = 2^{L_k} \alpha_k^{L_k^2 W_k} \left( \frac{2 e p_k U_k}{W_k} \right)^{L_k W_k}, \quad \gamma_k = L_k W_k, \quad \text{and} \quad \beta_k = L_k \alpha_k^{L_k}.
    \end{equation*}
    Let $\Lambda = \sum_{k=1}^{d} L_k W_k$, $\widetilde{L} = \sum_{k = 1}^d L_k$. The parameters in \cref{thm:PWPactions_Discrete} are $\widetilde{\gamma} = \sum_{k=1}^d \gamma_k = \Lambda$, $\sum_{k=1}^d \rho_k^2 \beta_k = \sum_{k=1}^d \rho_k^2 L_k \alpha_k^{L_k}$ and
    \begin{align*}
        \bar{\Gamma} = \prod_{k=1}^d \Gamma_k &= \prod_{k=1}^d 2^{L_k} \alpha_k^{L_k^2 W_k} \left( \frac{2 e p_k U_k}{W_k} \right)^{L_k W_k}\\
        &= 2^{\widetilde{L}} \prod_{k=1}^{d} \alpha_k^{L_k^2 W_k} \prod_{k=1}^{d} \left( \frac{2 e p_k U_k}{W_k} \right)^{L_k W_k}\\
        &\leq 2^{\widetilde{L}} \left(\prod_{k=1}^{d} \alpha_k^{L_k^2 W_k}\right) \left( \frac{\sum_{k=1}^{d}2 e L_k W_k p_k U_k/W_k}{\sum_{k=1}^{d}L_k W_k} \right)^{\sum_{k=1}^dL_k W_k}\\
        &= 2^{\widetilde{L}} \left(\prod_{k=1}^{d} \alpha_k^{L_k^2 W_k}\right) \left( \frac{\sum_{k=1}^{d}2 e p_k L_k U_k}{\Lambda} \right)^{\Lambda}, 
    \end{align*}
    where the inequality follows from the second inequality in \cref{lem:aux_inequalities}. 
    Then, we have that $\gamma' = \widetilde{\gamma} + W = \Lambda + W$ and 
    \begin{align*}
        \log(\Gamma') \leq &\log\left( 2^{d} \bar{\rho}^{(\Lambda+W)(M+1)}  2^{\widetilde{L}} \left(\prod_{k=1}^{d} \alpha_k^{L_k^2 W_k}\right) \left( \frac{\sum_{k=1}^{d}2 e p_k L_k U_k}{\Lambda} \right)^{\Lambda}  \left( \frac{e \sum_{k=1}^{d}\rho_k^2 \beta_k}{W} \right)^W\right)\\
        = &(d+\widetilde{L})\log 2 + (\Lambda+W)(M+1)  \log \bar{\rho} + \sum_{k=1}^{d} L_k^2 W_k \log \alpha_k + \Lambda \log\left( \sum_{k=1}^{d}2 e p_k L_k U_k \right)\\
            &-\Lambda \log \Lambda+ W \log\left( e \sum_{k=1}^{d}\rho_k^2 L_k\alpha_k^{L_k}/W \right).
    \end{align*}
    Therefore, we can apply \cref{lem:PWP_PseudoDimension} to obtain:
    \begin{align*}
        &\Pdim\left( \V^{\text{MLP}} \right)\\
        \leq &4 \left( (\Lambda+W) \log (2(\Lambda+W)+1) + \log(2\Gamma') \right)\\
        \leq &4 \left( \Lambda \log\left(\frac{2(\Lambda+W)+1}{\Lambda}\right) + W \log\left(\frac{2(\Lambda+W)+1}{W}\right) + (d+\widetilde{L}+1)\log2 \right.\\
        &+ (\Lambda+W)(M+1) \log \bar{\rho} + \sum_{k=1}^{d} L_k^2 W_k \log \alpha_k + \Lambda \log\left( \sum_{k=1}^{d}2ep_kL_kU_k \right) + 2W\log\left( e \bar{\rho} \widetilde{L} \right)+ W \log \left( \sum_{k=1}^{d}\alpha_k^{L_k} \right)\\
        \leq &4 \left( \Lambda \log5 + W \log (3+2\widetilde{L}) + (d+\widetilde{L}+1)\log2 + 2\Lambda (M+1) \log \bar{\rho} + \Lambda \log \left( \sum_{k=1}^{d} 2ep_kL_kU_k \right)\right.\\
        &\left.+ 2W \log \left( e \bar{\rho} \right) + 2W\log(\widetilde{L}) +W \log\left(\sum_{k=1}^{d}\alpha_k^{L_k}\right) + \sum_{k=1}^{d} L_k^2 W_k \log \alpha_k\right)\\
        = &\O\left( \Lambda + W\log \widetilde{L} + \Lambda M \log \bar{\rho} + \Lambda \log \left( \sum_{k=1}^{d}p_kL_kU_k \right) + W\log \bar{\rho}+W \log\left(\sum_{k=1}^{d}\alpha_k^{L_k}\right) + \sum_{k=1}^{d} L_k^2 W_k \log \alpha_k \right)\\
        = & \O \left( \Lambda M \log \bar{\rho} + \Lambda \log \left( \sum_{k=1}^{d}p_kU_k^2 \right) +W \log\left(\sum_{k=1}^{d}\alpha_k^{L_k}\right) + \sum_{k=1}^{d} L_k^2 W_k \log \alpha_k\right)\\
        = & \O \left( \left(\sum_{k=1}^d L_kW_k \right) \left(M \sum_{k=1}^{d}\log \rho_k+\log\left( \sum_{k=1}^{d}p_kU_k \right)\right)+W \log\left(\sum_{k=1}^{d}\alpha_k^{L_k}\right) + \sum_{k=1}^{d} L_k^2 W_k \log \alpha_k\right) 
    \end{align*}
    For ReLU MLPs, we have $\alpha_k = 1$ and $p_k = 2$ for all $k \in [d]$. Thus, we can simplify the last term to obtain:
    \begin{equation*}
        \Pdim\left( \V^{\text{MLP}} \right) = \O \left( \left(\sum_{k=1}^d L_kW_k \right) \left(M \sum_{k=1}^{d}\log \rho_k+\log\left( \sum_{k=1}^{d}U_k \right)\right)\right) 
    \end{equation*}
\end{proof}

\section{Data-dependent sample complexity}\label{sec:DataDependentSampleComplexity}
\subsection{Proof of Proposition \ref{lem:PWP_Rademacher}}
\begin{lemma}[Massart's Lemma]\label{lem:massart}
    Let $X = \{\x^1,\dots,\x^r\} \subseteq \R^N$ be a finite set of vectors. Then we have
    \begin{equation*}
         \E_{\bm{\sigma}\sim\{-1,1\}^N} \left[ \max_{j \in [r]} \frac{1}{N} \langle \bm{\sigma}, \x^j \rangle \right] \leq \max_{j \in [r]} \left\| \x^j - \frac{1}{r}\sum_{i=1}^{r}\x^i \right\|_2 \frac{\sqrt{2 \log r}}{N}. 
    \end{equation*}
\end{lemma}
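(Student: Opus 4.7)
The plan is to use the standard moment-generating-function (Chernoff) argument, which isolates the two essential ingredients: centering the vectors so that the relevant radius equals the right-hand-side quantity, and the sub-Gaussian property of Rademacher random variables. First I would reduce to centered vectors: define $\y^j := \x^j - \bar{\x}$ where $\bar{\x} := \frac{1}{r}\sum_{i=1}^r \x^i$. Since $\E_{\bm{\sigma}}[\bm{\sigma}] = \mathbf{0}$, we have $\E_{\bm{\sigma}}[\langle \bm{\sigma}, \bar{\x}\rangle] = 0$, hence $\E_{\bm{\sigma}}[\max_j \langle \bm{\sigma}, \x^j \rangle] = \E_{\bm{\sigma}}[\max_j \langle \bm{\sigma}, \y^j \rangle]$. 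Writing $R := \max_j \|\y^j\|_2$, which is exactly the quantity appearing on the right-hand side of the claim, it suffices to prove $\E_{\bm{\sigma}}[\max_j \langle \bm{\sigma}, \y^j \rangle] \le R \sqrt{2 \log r}$ and then divide by $N$.

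Next, for any $\lambda > 0$, Jensen's inequality applied to the convex function $\exp$, together with the bound $\max \le$ sum for nonnegative summands, gives
\begin{equation*}
\exp\!\left(\lambda\, \E_{\bm{\sigma}}\left[\max_j \langle \bm{\sigma}, \y^j \rangle\right]\right) \le \E_{\bm{\sigma}}\!\left[\max_j \exp\!\left(\lambda \langle \bm{\sigma}, \y^j \rangle\right)\right] \le \sum_{j=1}^{r} \E_{\bm{\sigma}}\!\left[\exp\!\left(\lambda \langle \bm{\sigma}, \y^j \rangle\right)\right].
\end{equation*}
For each fixed $j$, independence of the coordinates of $\bm{\sigma}$ factorizes the MGF as $\prod_{i=1}^{N} \cosh(\lambda y^j_i)$. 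The elementary Taylor-series inequality $\cosh(t) \le \exp(t^2/2)$ then yields the sub-Gaussian estimate $\E_{\bm{\sigma}}[\exp(\lambda \langle \bm{\sigma}, \y^j \rangle)] \le \exp(\lambda^2 \|\y^j\|_2^2 / 2) \le \exp(\lambda^2 R^2 / 2)$.

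Combining these pieces and taking logarithms gives $\lambda\, \E_{\bm{\sigma}}[\max_j \langle \bm{\sigma}, \y^j \rangle] \le \log r + \lambda^2 R^2 / 2$ for every $\lambda > 0$. Optimizing the right-hand side over $\lambda$ with the choice $\lambda = \sqrt{2 \log r}/R$ produces $\E_{\bm{\sigma}}[\max_j \langle \bm{\sigma}, \y^j \rangle] \le R \sqrt{2 \log r}$, and dividing by $N$ delivers the stated bound.

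The argument has no serious obstacle; the only slightly delicate step is verifying the sub-Gaussian inequality $\cosh(t) \le \exp(t^2/2)$, which follows by comparing Taylor coefficients termwise (the inequality $2^k k! \le (2k)!$ holds for all $k \ge 0$). The main conceptual point worth emphasizing is that the initial centering is what makes the bound depend on $\max_j \|\y^j\|_2$ rather than on $\max_j \|\x^j\|_2$, which could be arbitrarily larger if the vectors share a common mean component.
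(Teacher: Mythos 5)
Your proof is correct and is the standard Chernoff/moment-generating-function argument for Massart's finite class lemma; the paper itself states this as a known result without providing a proof, so there is nothing to diverge from. All the key steps check out: the centering step legitimately removes the common mean because $\langle \bm{\sigma}, \bar{\x}\rangle$ is independent of $j$ and has zero expectation; the factorization $\E[\exp(\lambda\langle\bm{\sigma},\y^j\rangle)] = \prod_i \cosh(\lambda y^j_i)$ and the bound $\cosh(t) \le \exp(t^2/2)$ are both valid; and the optimization $\lambda = \sqrt{2\log r}/R$ yields exactly $R\sqrt{2\log r}$. The only (cosmetic) caveat is the degenerate case $R=0$ or $r=1$, where that choice of $\lambda$ is undefined but the inequality holds trivially since the centered maximum is zero.
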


\begin{proof}[Proof of \cref{lem:PWP_Rademacher}]
    We aim to bound the empirical Rademacher complexity $\widehat{\mathcal{R}}_{S_N}(\V)$ for a sample $S_N = \{I_1,\dots,I_N\} \subseteq \I$ with $N \geq \sum_{k=1}^d(\gamma_k+W_k)$:
    \begin{equation*}
        \widehat{\mathcal{R}}_{S_N}(\V) = \E_{\bm{\sigma}\sim\{-1,1\}^N} \left[ \sup_{\w \in \W} \frac{1}{N} \sum_{i=1}^{N} \bm{\sigma}_i V(I_i, \w) \right].
    \end{equation*}
    Let $X = \{ ( V(I_1,\w),\dots,V(I_N,\w) ) \mid \w \in \W \}$ be the set of possible output vectors over the sample $S_N$. \cref{lem:PWP_InstanceDependent} implies that the parameter space $\W$ can be partitioned into at most $r(S_N)$ regions, such that within each region, the vector $(V(I_1,\w),\dots,V(I_N,\w))$ is constant. Therefore, the set $X$ is finite, containing at most $r(S_N)$ distinct vectors, say $X = \{\x^1,\dots,\x^{r(S_N)}\}$.
    The supremum over $\w \in \W$ can thus be replaced by a maximum over the finite set $X$:
    \begin{equation*}
        \widehat{\mathcal{R}}_{S_N}(\V) = \E_{\bm{\sigma}\sim\{-1,1\}^N} \left[ \max_{j \in [r(S_N)]} \frac{1}{N} \langle \bm{\sigma}, \x^j \rangle \right].
    \end{equation*}
    Applying Massart's Lemma (\cref{lem:massart}), we get
    \begin{align*}
        \widehat{\mathcal{R}}_{S_N}(\V) &\leq \max_{j \in [r(S_N)]} \left\| \x^j - \frac{1}{r(S_N)}\sum_{i=1}^{r(S_N)}\x^i \right\|_2 \frac{\sqrt{2 \log \left(r(S_N)\right)}}{N} \\
        &\leq  H\sqrt{N} \frac{\sqrt{2 \log \left(r(S_N)\right)}}{N} \\
        &\leq H \sqrt{\frac{2}{N} \log \left(2^{d} \bar{\Gamma} \left(\prod_{k=1}^d \widetilde{Q}_{M,k}^{\gamma_k}\right) \prod_{k=1}^d\left( \frac{e \widetilde{Q}_{M,k} \rho_k \beta_k}{W_k} \right)^{W_k}\right)} \\
        &= H  \sqrt{\frac{2}{N} \left(d \log 2  + \log \bar{\Gamma} + \sum_{k=1}^d \gamma_k \log \widetilde{Q}_{M,k} + \sum_{k=1}^d W_k \log \left( \frac{e \widetilde{Q}_{M,k} \rho_k \beta_k}{W_k} \right)\right)}\\
        &=   H \sqrt{\frac{2}{N} \left(d\log2+ \log \bar{\Gamma} + \sum_{k=1}^d (\gamma_k+W_k) \log \widetilde{Q}_{M,k} + \sum_{k=1}^{d} W_k \log \left( \frac{e \rho_k \beta_k}{W_k} \right)\right)} \\
        &\leq   H \sqrt{\frac{2}{N} \left(d\log2+ \log \bar{\Gamma} + \sum_{k=1}^d (\gamma_k+W_k) \log \left( \sum_{k=1}^d  \widetilde{Q}_{M,k} \right) +  W \log \left( \frac{e \sum_{k=1}^{d} \rho_k \beta_k}{W} \right)\right)} \\
        &\leq H \sqrt{\frac{2}{N} \left( d + \sum_{k=1}^{d} \log \Gamma_k + (\widetilde{\gamma}+W) \log \left(\sum_{k=1}^d\sum_{i=1}^N{Q}_{M,k}(I_i)\right) + W \log \left( \frac{e \sum_{k=1}^{d} \rho_k \beta_k}{W} \right)\right)}. 
    \end{align*}
    The second inequality follows because $V(I_i, \w) \in [0, H]$ implies that each coordinate of $\x^j$ and $\frac{1}{r}\sum_{i=1}^{r}\x^i$ lies in $[0,H]$. Thus, each coordinate of the difference vector $\x^j - \frac{1}{r}\sum_{i=1}^{r}\x^i$ is bounded in absolute value by $H$. Since this vector is $N$-dimensional, its $L_2$ norm is therefore upper bounded by $H\sqrt{N}$. The third inequality is due to \eqref{eq:bound_r(S_N)_tighter}.
\end{proof}

\subsection{Uniform convergence bounds from pseudo-dimension and Rademacher complexity}\label{sec:UniformConvergenceComparison}

As discussed in \cref{sec:general-framework-data-dependent}, both pseudo-dimension and empirical Rademacher complexity yield uniform convergence guarantees, as stated in \eqref{eq:uniform-convergence-pdim} and \eqref{eq:uniform-convergence-rad-empirical}, respectively. The pseudo-dimension provides a data-independent measure of function class complexity. Its calculation (as in \cref{thm:PWPactions_Discrete}, leading to Propositions \ref{prop:LinearPseudoDimension} and \ref{prop:MLPPseudoDimension}) ultimately incorporates worst-case estimates for the number of distinct state-action pairs, related to $\rho_k \bar{\rho}^M$ per instance (derived from \cref{lemma:distinct_state_action_pairs}). In contrast, the empirical Rademacher complexity bound (Proposition \ref{lem:PWP_Rademacher}) directly uses the empirically observed sum $\sum_{k=1}^d Q_{M,k}(I)$ for any instance $I \sim \D$. Since this sum can be substantially smaller than the worst-case estimate (i.e., $ \rho_k \bar{\rho}^M$) in many settings, Rademacher-based bounds can be much tighter in practice. To facilitate a concrete comparison of the uniform convergence guarantees obtained by these two measures, we now consider the specific case where ReLU MLPs, characterized by $L_k,W_k,U_k$ for $k \in [d]$, serve as the scoring functions. Given $S_N = \{I_1,\dots,I_N\} \overset{\textrm{i.i.d}}{\sim} \D^N$, substituting the pseudo-dimension bound for ReLU MLP policies from \cref{prop:MLPPseudoDimension} into \eqref{eq:uniform-convergence-pdim}, we have:
\begin{align*}
    &\sup_{\w \in \W} \left| \frac{1}{N}\sum_{i=1}^NV(I_i, \w) - \E_{I\sim D}[V(I_i, \w)] \right|\\
    = & \O\left( H \sqrt{\frac{\Pdim(\V)+\log(1/\delta)}{N}} \right)\\
    = & \O\left( H \sqrt{\frac{\left(\sum_{k=1}^d L_kW_k\right)\left( M \sum_{k=1}^d \log \rho_k + \log \left( \sum_{k=1}^d U_k\right) \right)}{N}} + H \sqrt{\frac{\log(1/\delta)}{N}}\right).
\end{align*}
To derive the empirical Rademacher complexity bound for policies employing $d$ ReLU MLP scoring functions, we apply the results of \cref{lem:MLP-PWP-structure} by taking $\alpha_k=1$ and $p_k=2$ (characteristic of ReLU). Let $\Lambda = \sum_{k=1}^d L_k W_k$ and $\widetilde{L} = \sum_{k=1}^d L_k$, and for this MLP case $\widetilde{\gamma} = \Lambda$ based on \cref{lem:MLP-PWP-structure}. Substituting these ReLU-specific structural parameters $(\Gamma_k, \gamma_k, \beta_k)$ into the general empirical Rademacher complexity bound from \cref{lem:PWP_Rademacher} yields:
    \begin{align*}
        &\widehat{\mathcal{R}}_{S_N}(\V) \\
        \leq &H \sqrt{\frac{2}{N} \left( d + \widetilde{L} + \Lambda \log \left( \frac{\sum_{k=1}^{d}L_k U_k}{\Lambda} \right) + (\Lambda+W) \log \left(\sum_{k=1}^d\sum_{i=1}^N{Q}_{M,k}(I_i)\right) + W \log \left( \frac{e \sum_{k=1}^{d} \rho_k L_k}{W} \right)\right)}\\
        \leq &H \sqrt{\frac{2}{N} \left( d + \widetilde{L} + (\Lambda+W) \log \left(\sum_{k=1}^d\sum_{i=1}^N{Q}_{M,k}(I_i)\right) + W \log \left( e \sum_{k=1}^{d} \rho_k  \right)\right)}\\
        = &\O\left( H \sqrt{\frac{1}{N} \left( \Lambda \log \left(\sum_{k=1}^d\sum_{i=1}^N{Q}_{M,k}(I_i)\right) + W \log \left( \sum_{k=1}^{d} \rho_k  \right)\right)} \right)
    \end{align*}

Therefore, substituting this into \eqref{eq:uniform-convergence-rad-empirical}, we obtain the following bound with probability at least $1-\delta$: 
\begin{align}
    &\sup_{\w \in \W} \left| \frac{1}{N}\sum_{i=1}^NV(I_i, \w) - \E_{I\sim D}[V(I_i, \w)] \right|\notag\\
    = & \O\left( \widehat{\mathcal{R}}_{S_N}(\V) + H\sqrt{\frac{\log(1/\delta)}{N}}\notag \right)\\
    = & \O\left( H \sqrt{\frac{\left(\sum_{k=1}^d L_kW_k\right) \log \left( \sum_{k=1}^d \sum_{i=1}^N  Q_{M,k}(I_i) \right) + W\log \left( \sum_{k=1}^d \rho_k\right) }{N}} + H \sqrt{\frac{\log(1/\delta)}{N}}\right). \label{eq:Appendix-rad-empirical}
\end{align}

The preceding uniform convergence bounds for ReLU MLP policies highlight the key distinction when comparing their dominant terms. The pseudo-dimension based bound \eqref{eq:uniform-convergence-pdim}, via \cref{prop:MLPPseudoDimension}, is influenced by factors scaling with $M \sum_{k=1}^d\log {\rho_k}$. This term reflects a dependency on the maximum rounds $M$ and the maximum action space sizes, characteristic of worst-case, data-independent analysis.
In contrast, the Rademacher-based bound \eqref{eq:Appendix-rad-empirical} has the corresponding leading term $\log \left( \sum_{k=1}^d\sum_{i=1}^N Q_{M,k}(I_i) \right)$. This shows a dependency on the logarithm of the empirically observed total number of distinct state-action pairs. As this empirical sum can be substantially smaller than the quantities implied by the worst-case parameters $M$ and $\bar{\rho}$ in many practical scenarios, the Rademacher-based bound may offer a tighter guarantee.

As an example, consider a set of binary integer programming problems, each with variables $\x_j \in \{0,1\}$ for $j \in [n]$. Assume that for these problems, the feasible regions of their respective LP relaxations all have an empty intersection with the boundary of the $[0,1]^n$ hypercube. Suppose that only the variable branching policy is learned (i.e., $d=1$, with $\rho_1 = n$), while other decisions are deterministic. In such a scenario, any variable branching operation can lead to the termination of \cref{alg:general_decision_process} in the context of B\&C by proving infeasibility for the resulting subproblems. Then, the total number of distinct branching decision contexts (state-action pairs) for an instance $I_i$ is on the order of $O(n)$. However, the pseudo-dimension based bound would still have the term $M \log n$. This term could be much larger than a term like $\log(Nn)$, which would be the leading term of the bound derived from empirical Rademacher complexity in this case (since $\sum_{i=1}^N Q_{M,1}(I_i) = O(Nn)$). Consequently, the empirical Rademacher complexity's direct dependence on $\sum_{i=1}^N Q_{M,1}(I_i)$ would yield a tighter sample complexity guarantee.

In a similar vein, if a tighter universal bound on $Q_{M,k}(I)$ for all instances $I$ can be established—based on specific properties or structural characteristics of problems drawn from the distribution under study—that is better the worst possible $\rho_k\bar\rho^M$ estimate (which was used in \cref{lemma:distinct_state_action_pairs}), then the bounds in \cref{thm:PWPactions_Discrete} can be improved. Consequently, this would also lead to more refined pseudo-dimension bounds in \cref{prop:LinearPseudoDimension,prop:MLPPseudoDimension}. 
A simple bound is $|\S|$, the total number of states in the state space $\S$. In certain settings, $|\S|$ may be much smaller than $\rho_k\bar\rho^M$ because the latter counts different paths taken in the decision process which can be exponentially larger than the total number of states. Another situation is when the termination states/conditions in the decision process imply a smaller value for $Q_{M,k}(I)$ for all $I$, compared to $\rho_k\bar\rho^M$ or $|\S|$.

While bounds based on the empirical Rademacher complexity, $\widehat{\mathcal{R}}_{S_N}(\V)$, are valuable as they adapt to the specific set of instances $S_N$ (e.g., via terms involving instance-specific quantities $Q_{M,k}(I_i)$ for $I_i \in S_N$), this instance-specificity means their values would be hard to estimate for a different set of instances $S'_{N} \sim \mathcal{D}^{N}$ (e.g., when considering a new test set). To obtain a bound that is characteristic of the distribution $\mathcal{D}$ (for a given sample size $N$), we use the expected Rademacher complexity, $\mathcal{R}_N(\V) = \E_{S_N \sim \mathcal{D}^N}[\widehat{\mathcal{R}}_{S_N}(\V)]$. This quantity reflects the average-case complexity over all possible samples, depending on distribution-dependent quantities like $\mu_{M,k} = \E_{I \sim \mathcal{D}}[Q_{M,k}(I)]$, rather than instance-specific values from a particular sample. Formally, this derivation proceeds as follows:
\begin{align*}
    \mathcal{R}_N(\V) &= \E_{S_N \sim \D^N} \left[ \widehat{\mathcal{R}}_{S_N}(\V) \right] \\
    & \leq \E_{S_N \sim \D^N} \left[ H \sqrt{\frac{2}{N} \left( d + \widetilde{L} + (\Lambda+W) \log \left(\sum_{k=1}^d\sum_{i=1}^N{Q}_{M,k}(I_i)\right) + W \log \left( e \sum_{k=1}^{d} \rho_k \right)\right)}\right] \\
    & \leq H \sqrt{\frac{2}{N} \left( d + \widetilde{L} + (\Lambda+W) \E_{S_N \sim \D^N} \left[\log \left(\sum_{k=1}^d\sum_{i=1}^N{Q}_{M,k}(I_i)\right)\right] + W \log \left( e \sum_{k=1}^{d} \rho_k \right) \right)} \\
    & \leq H \sqrt{\frac{2}{N} \left( d + \widetilde{L} + (\Lambda+W) \log \left(\E_{S_N \sim \D^N} \left[\sum_{k=1}^d\sum_{i=1}^N{Q}_{M,k}(I_i)\right]\right) + W \log \left( e \sum_{k=1}^{d} \rho_k \right) \right)} \\
    & = H \sqrt{\frac{2}{N} \left( d + \widetilde{L} + (\Lambda+W) \log \left(N \sum_{k=1}^d \mu_{M,k} \right) + W \log \left( e \sum_{k=1}^{d} \rho_k \right) \right)},
\end{align*}
where the second and third inequalities utilized Jensen's inequality. 
Then, standard uniform convergence results (e.g., \citep[Theorem 3.3]{Mohri2018foundations}) imply that with probability at least $1-\delta$:
\begin{align*}
    &\sup_{\w \in \W} \left| \frac{1}{N}\sum_{i=1}^NV(I_i, \w) - \E_{I\sim \mathcal{D}}[V(I, \w)] \right|\\
    & = \O\left( \mathcal{R}_{N}(\V) + H\sqrt{\frac{\log(1/\delta)}{N}} \right)\\
    & = \O\left( H \sqrt{\frac{\left(\sum_{k=1}^d L_kW_k\right) \log \left( N \sum_{k=1}^d\mu_{M,k} \right) + W\log \left( \sum_{k=1}^d \rho_k\right) }{N}} + H \sqrt{\frac{\log(1/\delta)}{N}}\right).
\end{align*}

\end{document}